\documentclass[10pt,journal,cspaper,compsoc]{IEEEtran}

\usepackage[margin=10pt,font=footnotesize,labelfont=bf,labelsep=endash]{caption}

\usepackage{float}
\usepackage{subcaption}

\usepackage{algorithmic}
\usepackage[linesnumbered, algoruled, vlined]{algorithm2e}

\usepackage{graphicx}
\usepackage{amssymb}
\usepackage{amsmath}
\usepackage{amsfonts}
\usepackage{color}
\usepackage{dsfont}
\usepackage{upgreek}
\usepackage{bbm}
\usepackage{url}
\usepackage{multirow}
\usepackage{xspace}
\usepackage{eucal}
\usepackage{proof}

\usepackage{hyperref}

 \usepackage[table]{xcolor}
 \usepackage{arydshln}

\newtheorem{proposition}{\bf Proposition}
\def\relationindicator{\delta(y_{ij} \neq 0)}

\newcommand{\bY}{\mathbf{Y}}

\newcommand{\cX}{\mathcal{X}}

\newcommand{\cV}{\mathcal{V}}
\newcommand{\cU}{\mathcal{U}}

\def\exp{\operatorname*{exp\,}}

\def\sign{\operatorname*{sign\,}}

\def\x{{\boldsymbol x}}
\def\w{{\boldsymbol w}}

\def\z{{\boldsymbol z}}

\def\Z{{\mathbf  Z}}
\def\Y{{\mathbf  Y}}
\def\A{{\mathbf  A}}

\def\bY{{\mathbf  Y}}

\newcommand{\comment}[1]{}

\def\T{{\!\top}}
\def\Real{\mathbb{R}}
\newcommand{\st}{{{\rm s.t.}\;\;}\xspace}

\def\x{{\boldsymbol x}}
\def\w{{\boldsymbol w}}

\def\hs{\Phi}
\def\bh{\hs}

\def\dhamm{d_{\mathrm{H}}}
\def\shamm{s_{\mathrm{H}}}

\def\best{\bf \cellcolor[gray]{0.998}}

\newcommand{\figcenter}[1]{\raisebox{-0.5\height}{#1}}
\newcommand{\colseperator}{\hspace*{.1in}}
\newcommand{\rowseperator}{\vspace*{.05in}}

\def\fasth{{FastHash}\xspace}
\def\ssum{\sum}
\def\tree{{T}}
\def\block{{\mathcal{B}}}

\begin{document}

\title{Supervised Hashing Using Graph Cuts and Boosted Decision Trees}

\author{Guosheng Lin, Chunhua Shen, Anton {van den Hengel}
\IEEEcompsocitemizethanks{
\IEEEcompsocthanksitem
Authors are with  the School of Computer Science, The University of Adelaide,
Australia; and
Australian Research Council Centre of Excellence for Robotic Vision;
\protect\\
Corresponding author: Chunhua Shen (chunhua.shen@adelaide.edu.au);
\protect\\
\textcolor{blue}{
(c) IEEE 2015. Appearing in  IEEE Trans. Pattern Analysis and Machine Intelligence.
Content may change prior to final publication.
}
}
\thanks{}}

\markboth{Appearing in IEEE Trans. Pattern Analysis and Machine Intelligence, Feb.\ 2015}{}

\IEEEcompsoctitleabstractindextext{%
\begin{abstract}

To build large-scale query-by-example image retrieval systems, embedding image features into a
binary Hamming space provides great benefits.
Supervised hashing aims to map the original features to compact binary codes that are
able to preserve label based similarity in the binary Hamming space.
Most existing approaches apply a single form of hash function,
and an optimization process which is typically deeply coupled to this specific form.
This tight coupling restricts the flexibility of those methods,
and can result in complex optimization problems that are difficult to solve.
In this work we proffer a flexible yet simple framework that is able to accommodate
different types of loss functions and hash functions.
The proposed framework allows a number of existing approaches to hashing to be placed in context,
and simplifies the development of new problem-specific hashing methods.
Our framework decomposes the hashing learning problem into two steps:
binary code (hash bit) learning and hash function learning.
The first step can typically be formulated as binary quadratic problems,
and the second step can be accomplished by training a standard binary classifier.
For solving large-scale binary code inference,
we show how it is possible to ensure that the binary quadratic problems are submodular such that efficient graph cut methods may be used.
To achieve efficiency as well as efficacy on large-scale high-dimensional data,
we propose to use boosted decision trees as the hash functions,
which are nonlinear, highly descriptive, and are very fast to train and evaluate.
Experiments demonstrate that the proposed method significantly outperforms most state-of-the-art methods,
especially on high-dimensional data.

\end{abstract}

\begin{keywords}
  Hashing, Binary Codes, Graph Cuts, Decision Trees,
  Nearest Neighbour Search, Image Retrieval.
\end{keywords}

}

\maketitle

\section{Introduction}

An explosion in the size of the datasets has been witnessed in the past a few years.
It becomes more and more demanding to cope with image datasets with tens of millions of images, in terms of both
efficient storage and processing.
Hashing methods construct a set of hash functions that map the original features into
compact binary codes. Hashing enables fast nearest neighbor search by using look-up tables or Hamming distance based ranking.
Moreover, compact binary codes are extremely efficient for large-scale data storage.
Example applications include image retrieval (\cite{Torralba08,wang2010semi}), image matching \cite{Strecha2012},
object detection \cite{fastdection}, etc.

Loss functions for learning-based hashing
are typically defined on the basis of the Hamming distance or Hamming affinity  of similar and
dissimilar data pairs. Hamming affinity is calculated by the inner
product of two binary codes (a binary code takes a value from
$\{-1,1\}$).
Existing methods thus tend to optimize a single form of hash function.
The common forms of hash functions
are linear perceptron functions (e.g.,
Minimal Loss Hashing (MLH) \cite{norouzi2011minimal},
Semi-supervised Hashing (SPLH) \cite{wang2010semi},
Iterative Quantization (ITQ) \cite{gong2012iterative},
Locality-Sensitive Hashing (LSH) \cite{Gionis1999}),
kernel functions
(Supervised Hashing with Kernels (KSH) \cite{KSH}),
eigenfunctions (Spectral Hashing (SPH) \cite{weiss2008spectral},
Multidimensional Spectral Hashing (MDSH) \cite{MDSH}).
The optimization procedure is then coupled with the selected family of hash
functions.
Different types of hash functions offer a trade-off between testing time and
fitting capacity. For example, compared with kernel functions, the
simple linear perceptron function is usually much more efficient for
evaluation but can have a relatively low accuracy for nearest neighbor search.
This coupling often results in a highly
non-convex optimization problem which can be very challenging to optimize.
As an example, the loss functions in MDSH, KSH and
Binary Reconstructive Embeddings (BRE) \cite{kulis2009learning} all take a
similar form that aims to minimize the difference between the Hamming
affinity (or distance) of data pairs and the ground truth.
However, the optimization procedures used are
coupled with the form of hash functions (eigenfunctions, kernel
functions) and thus different optimization techniques are needed for each.

Our framework, however, is able to
accommodate any loss function defined on the Hamming distance/affinity
of data pairs, such as the loss functions used in KSH, BRE or MLH.
We decompose the learning into two steps:
the binary codes inference step and the hash function learning step.
We can formulate the optimization problem of any Hamming distance/affinity based loss as binary quadratic problems,
hence different types of loss functions are unified into
the same optimization problem, which significantly simplifies the optimization.
With this decomposition the hash function learning
becomes a binary classification problem,
hence we can learn various types of hash function,
like perceptrons, kernel and decision tree hash functions,
by simply training binary classifiers.

Many supervised hashing approaches
require complex optimization for directly learning hash functions,
and hence may only be tractable for small scale training data.
In our approach, we propose an efficient graph cut based block search algorithm
for solving the large-scale binary code inference problem,
thus our method can be easily trained on large-scale datasets.

Recent advances in the feature learning (\cite{coates2011importance,krizhevsky2012imagenet}) show that
high-dimensional features are essential for achieving good performance.
For example, the dimension of codebook based features is usually in the tens of thousands.
Many existing hashing methods become impractically slow when trained on large scale high-dimensional features.
Non-linear hash functions, e.g., the kernel hash function employed in KSH,
have shown much improved performance over the linear hash function. %
However, kernel functions can be extremely expensive to evaluate for both training and testing on high-dimensional features.
Here we propose to learn decision trees as hash functions for non-linear mapping.
Decision trees only involve simple comparison operations,
thus they are very efficient to evaluate.
Moreover, decision trees are able to work on quantized data without significant performance loss,
and hence only consume a small amount of memory for training.

The main contributions of this work are as follows.
\begin{itemize}

\item
    We propose a flexible and efficient hashing framework
   which is able to incorporate various kinds of loss functions and hash functions.

      We decompose the learning
      procedure into two steps: binary code inference, and hash function learning.
      This decomposition
      simplifies the hash function
      learning problem into a standard binary classification problem.
      An arbitrary classifier, such as linear or kernel Support Vector Machines (SVM), boosting, or neural networks,
      may thus be adopted to learn the hash functions.

      We are able to incorporate various types of loss function in a unified manner.
      We show that any type of loss function
      (e.g., the loss functions in KSH, BRE, MLH)
      defined on Hamming affinity or Hamming distance,
      can be equivalently converted into a standard quadratic function,
      thus we can solve a standard binary quadratic problem for binary code inference.

\item

    For binary code inference, we propose sub-modular formulations and an
    efficient graph cut \cite{boykov2001fast} based block search method for solving large-scale binary code inference.

\item

   We propose to use (ensembles of) decision trees as hash functions for supervised hashing,
   which can easily deal with a very large number of high-dimensional training data
   and has the desired non-linear mapping.

\item

   Our method significantly outperforms many existing methods in terms of retrieval accuracy.
   For high-dimensional data, our method is also  orders of magnitude faster for training.

\end{itemize}

We made the code available at \url{https://bitbucket.org/chhshen/fasthash/}.

\subsection{Related work}

Hashing methods aim to preserve some notion of
similarity (or distance) in the Hamming space.
These methods can be roughly categorized as being either supervised or unsupervised.
Unsupervised hashing methods
(\cite{Gionis1999,weiss2008spectral,MDSH,gong2012iterative,KMH,liu2011hashingGraphs,jae2012,CVPR13aShen})
try to preserve the similarity which is often calculated in the original feature space.
For example,
LSH
\cite{Gionis1999} generates random linear hash functions to
approximate cosine similarity;
SPH (\cite{weiss2008spectral,MDSH}) learns eigenfunctions that preserve Gaussian affinity;
ITQ \cite{gong2012iterative} approximates the Euclidean distance in the Hamming space.
Supervised hashing is designed to preserve the label-based similarity
(\cite{SalakhutdinovH07,kulis2009learning,norouzi2011minimal,zhangSTHs,wang2010semi,KSH,li2013learning,lin2014optimizing}).
This might take place, for example, in the case where
images from the same category are defined as being semantically similar to each other.
Supervised hashing has received increasing attention recently
(e.g., KSH \cite{KSH},
BRE \cite{kulis2009learning}).
Our method targets supervised hashing.
Preliminary results of our work appeared in \cite{TSH} and \cite{fasthash}.
Various optimization techniques are proposed in existing methods.
For example, random projection is
used in LSH and Kernelized Locality-Sensitive Hashing (KLSH) \cite{KLSH};
spectral graph analysis for
exploring the data manifold is used in SPH \cite{weiss2008spectral}, MDSH \cite{MDSH},
STH \cite{zhang2010self}, Hashing with Graphs (AGH) \cite{liu2011hashingGraphs}, and inductive hashing \cite{CVPR13aShen};
vector quantization is used in ITQ \cite{gong2012iterative},
and K-means Hashing \cite{KMH};
kernel methods are used in KSH \cite{KSH} and KLSH \cite{KLSH}.
MLH \cite{norouzi2011minimal} optimizes a hinge-like loss.
The optimization techniques in most existing work are tightly coupled
with their loss functions and hash functions.
In contrast, our method breaks this coupling and easily incorporates various
types of loss function and hash function.

A number of existing hash methods have explicitly or implicitly employed two-step optimization based strategies
for hash function learning,
like Self-Taught Hashing (STH) \cite{zhang2010self}, MLH \cite{norouzi2011minimal},
Hamming distance metric learning \cite{norouzi2012hamming}, ITQ \cite{gong2012iterative}
and angular quantization based binary code learning \cite{gong2012angular}.
However, in these existing methods,
the optimization techniques for binary inference and hash function
learning are deeply coupled to their specific form of loss function and hash functions,
and none of them is as general as our learning framework.

STH \cite{zhang2010self} explicitly
employs a two-step learning scheme for optimizing the Laplacian affinity loss.
The Laplacian affinity loss in STH only tries to pull
together similar data pairs but does not push away dissimilar data pairs,
which may lead to inferior performance \cite{EE}.
Moreover, STH employs a spectral method for binary code inference,
which usually leads to inferior binary solutions due to its loose relaxation.
Moreover, the spectral method does not scale well on large training data.
In contrast, we are able to incorporate any hamming distance or affinity based loss function,
and propose an efficient graph cut based method for large scale binary code inference.

MLH \cite{norouzi2011minimal} learns hash functions by optimizing a convex-concave upper-bound of a hinge loss function (or BRE loss function).
They need to solve a binary code inference problem during optimization, for which they propose a so-called loss-adjusted inference algorithm. A similar technique is also applied in \cite{norouzi2012hamming}.
The training of ITQ \cite{gong2012iterative} also involves a two-step optimization strategy.
ITQ iteratively generates the binary code and learns a rotation matrix by minimizing the quantization error against the binary code. They generate the binary code simply by thresholding.

The problem of similarity search on high-dimensional data is also addressed in \cite{li2011learning}.
Their method extends vocabulary tree based search methods (\cite{nister2006scalable,philbin2007object})
by replacing vocabulary trees with boosted trees.
This type of search method represents the image as the evidence of a large number of visual words,
which are vectors with thousands or even millions dimensions.
Then this visual word representation is fed into an
inverted index based search algorithm to output the final retrieval result.
Clearly hashing methods are different from these inverted index based search methods.
Our method is in the vein of supervised hashing methods:
mapping data points into binary codes so that the hamming distance on binary
codes reflects the label based similarity.

\section{Flexible Two-Step Hashing}

Let $\cX=\{\x_1, ..., \x_n\} \subset \Real^d$ denote a set of training points.
Label based similarity information is described by an affinity matrix: $\bY$,
which is the ground truth for supervised learning.
The element in $\bY$: $y_{ij}$ indicates the similarity between two data points $\x_i$ and $\x_j$; and $y_{ij}=y_{ji}$.
Specifically, $y_{ij}=1$ if two data points are similar (relevant), $y_{ij}=-1$ if dissimilar (irrelevant)
and $y_{ij}=0$ if the pairwise relation is undefined.
We aim to learn a set of hash functions which preserve the label based similarity in the Hamming space.
A hash function is denoted by $h(\cdot)$ with binary output: $h(\x) \in \{-1, 1\}$.
The output of $m$ hash functions is denoted by $\bh(\x)$:
 \begin{align}
	\bh(\x)=[h_1(\x), h_2(\x), \dots, h_m(\x)],
 \end{align}
which is a $m$-bit binary vector: $\bh(\x) \in \{-1,1\}^m$.

The loss function in hashing learning for preserving pairwise similarity relations is typically defined in terms of the Hamming distance or Hamming affinity of data pairs.
The Hamming distance between two binary codes is the number of bits taking different values:
\begin{align}
	\label{fg-eq:indicator}
	\dhamm(\x_i, \x_j)=\sum_{r=1}^m \delta(h_r(\x_i) \neq h_r(\x_j)).
\end{align}
Here $\delta(\cdot) \in \{0, 1\}$ is an indicator function which
outputs $1$ if the input is true and $0$ otherwise.
Generally, the formulation of hashing learning encourages small Hamming distances for similar data pairs and large distances for dissimilar data pairs.
Closely related to Hamming distance, the Hamming affinity is calculated by the inner product of two binary codes:
\begin{align}
	\shamm(\x_i, \x_j)=\sum_{r=1}^mh_r(\x_i)h_r(\x_j).
\end{align}
As shown in \cite{KSH},
the Hamming affinity is in one-to-one correspondence with the Hamming distance.
We solve the following optimization for hash function learning:
 \begin{align}
 	\label{tsh-eq:opt_main}
	\min_{\bh(\cdot)} \sum_{i=1}^n\sum_{j=1}^n
    \relationindicator L(\bh(\x_i), \bh(\x_j); y_{ij}).
 \end{align}
Here $\relationindicator \in \{0, 1\}$  indicates
whether the relation between two data points is defined, and
$L(\cdot)$ is a loss function that
measures how well the binary codes
match the similarity ground truth $y_{ij}$. Various types
of loss functions  $ L (\cdot)$ have been
proposed, and will be discussed in detail in the next section.
Most existing methods try to directly optimize the objective function in
\eqref{tsh-eq:opt_main} in order to learn
the parameters of hash functions (\cite{KSH,norouzi2011minimal, kulis2009learning, MDSH}).
This inevitably means that the optimization process is tightly coupled to the
form of hash functions used,
which makes it non-trivial to extend a method to use
other types of hash functions.
Moreover, this coupling usually
results in challenging optimization problems.

As an example, the KSH loss function, which is defined on Hamming affinity, is written as follows:
\begin{align}
 	\label{fh-eq:ksh_loss_org}
	L_{\mathrm{KSH}} = \sum_{i=1}^n\sum_{j=1}^n
     \relationindicator \biggr[m y_{ij} - \sum_{r=1}^m h_r(\x_i)h_r(\x_j) \biggr]^2.
\end{align}
We use $\relationindicator$ to prevent undefined pairwise relations from having an impact on the training objective.
Intuitively, this optimization encourages the Hamming affinity value of a data pair to be close to the ground truth value.
The form of hash function in KSH is the kernel function:
\begin{align}
	h(\x) = \sign\left[  \sum_{q=1}^Q w_q\kappa(\x_q', \x) + b \right],
 \end{align}
in which $\cX'=\{\x'_1,\dots,\x'_Q\}$ are $Q$ support vectors;
KSH directly solve the optimizations in \eqref{fh-eq:ksh_loss_org} for learning the hash functions.
If we prefer other forms of hash functions, the optimization of KSH would not be applicable.
For example if using the decision-tree hash function which is more suitable for high-dimensional data,  it is not clear how to learn decision trees by directly optimizing \eqref{fh-eq:ksh_loss_org}.
Moreover, KSH uses a set of predefined support vectors which are randomly sampled from the training set, and it does not have a sparse solution of the weighting parameters. Hence this unsophisticated kernel method would be impracticable for large-scale training and computationally expensive for evaluation.

Here we develop a general and flexible two-step learning framework,
which is readily to incorporate various forms of loss functions and hash functions.
Basically,  partly inspired by STH \cite{zhang2010self}, we decompose the
learning procedure into two steps: the first step for binary code
inference and the second step for hash function learning.
We introduce auxiliary variables $z_{r,i} \in \{-1, 1\}$
as the output of the $r$-th hash function on $\x_i$:
\begin{align}
	z_{r,i}=h_r(\x_i).
\end{align}
Clearly,
$z_{r,i}$ represents the $r$-th bit of the binary code of the $i$-th data point.
With these auxiliary variables, the problem in \eqref{tsh-eq:opt_main} can be decomposed into two
sub-problems:
\begin{subequations}
\label{tsh-eq:opt_step1}
\begin{align}
& \min_{\Z} \sum_{i=1}^n\sum_{j=1}^n \relationindicator
     L(\z_i, \z_j; y_{ij}), \;\;\; \\
    & \st \;\; \Z \in \{-1, 1\}^{ m \times n};
 \end{align}
\end{subequations}
and,
\begin{align}
	\min_{\bh(\cdot)} \;\; & \ssum_{r=1}^{m} \ssum_{i=1}^n \delta( z_{r,i} = h_r(\x_i) ).
    \label{tsh-eq:opt_step2_org}
\end{align}
Here $\Z$ is the matrix of $m$-bit binary codes for all $n$ training data points;
$\z_i$ is the binary code vector corresponding to $i$-th data point.
$\delta(\cdot)$ is an indicator function.
In this way, the hashing learning in \eqref{tsh-eq:opt_main} now becomes two relatively simpler tasks---solving \eqref{tsh-eq:opt_step1} (Step 1) and \eqref{tsh-eq:opt_step2_org}  (Step 2).
Clearly, Step 1 is to solve for binary codes,
and Step 2 is to solve simple binary classification problems.

We sequentially solve for one bit at a time conditioning on previous bits.
Hence we solve these two steps alternatively, rather than completely separating these two steps.
After solving for one bit, the binary codes are updated by applying the learned hash function.
Hence the learned hash function is able to influence the binary code inference for the next bit.
This bit-wise optimization strategy helps to simplify the optimization, and the error of one learned hash function can be propagated and compensated for when learning the next bit.

In the following sections, we describe how to solve these two steps for one bit.
In Sec. \ref{tsh-sec:step1}, we show that any hamming distance or affinity based loss function can be equivalently reformulated as a binary quadratic problem. For binary code inference, we propose a graph cut based block search method for efficiently solving the binary code inference (Sec. \ref{tsh-sec:step1-gc}).
Later we discuss training different types of hash functions in Sec. \ref{tsh-sec:step2}. Especially we introduce the decision tree hash functions which provide the desirable non-linear mapping and are highly efficient for evaluation (Sec. \ref{tsh-sec:step2-tree}).
With the proposed efficient binary code inference algorithm, our method is not only flexible,
but also capable of large-scale training.
We refer to our method as \fasth. The algorithm is shown in Algorithm \ref{tsh-alg:main}.

\begin{algorithm}[t]
	\caption{\small FastHash (flexible two-step hashing)}

	\label{tsh-alg:main}

	\KwIn{training data points: $\{\x_1,...\x_n\}$;
	affinity matrix: $\Y$; bit length: $m$. }
	\KwOut{hash functions: $\bh=[h_1, ..., h_m]$.}
	Initialization: construct blocks:$\{ \block_1, \block_2, ...\}$ for Block GraphCut,
	Algorithm \ref{fh-alg:block} shows an example\;
	\For{ $r=1,...,m$}
	{
		Step-1: call Algorithm \ref{fh-alg:step1} to solve the binary code inference in \eqref{tsh-eq:bqp-org},
		obtain binary codes of the $r$-th bit\;
		Step-2: solve binary classification in \eqref{tsh-eq:opt_step2_onebit} to obtain one hash function $h_r$
		(e.g., solve linear SVM in \eqref{eq:lsvm} or boosted tree learning in \eqref{eq:adaboost}) \;
		Update the binary codes of the $r$-th bit by applying the learned hash function $h_r$\;
	}

\end{algorithm}

\begin{algorithm}[t]

	\caption{\small An example for constructing blocks}

	\label{fh-alg:block}
    \KwIn{training data points: $\{\x_1,...\x_n\}$; affinity matrix: $\Y$.}
	\KwOut{blocks:$\{ \block_1, \block_2, ...\}$.}
	$\cV \leftarrow \{\x_1,...,\x_n\}$; $t=0$\;
	\Repeat{$\cV = \emptyset$}     {
		$t=t+1$; $\block_t \leftarrow \emptyset$\;
		Randomly selected $\x_i$ from $\cV$\;
		Initialize $\cU$ as the joint set of $\cV$ and similar examples of $\x_i$ \;
		\For{each $\x_j$ in $\cU$}{
			\If{$\x_j$ is not dissimilar with any examples in $\block_t$}
			{ add $\x_j$ to $\block_t$ \;
			remove $\x_j$ from $\cV$ \;}
		}
	}

\end{algorithm}

\begin{algorithm}[t]

	\caption{\small Step-1: Block GraphCut for binary code inference}
		\label{fh-alg:step1}
		\KwIn{affinity matrix: $\Y$; bit length: $r$; blocks:$\{ \block_1, \block_2, ...\}$;
			   binary codes: $\{\z_1, ..., \z_{r-1} \}$.
		}
		\KwOut{binary codes of one bit: $\z_r$.}
		\Repeat{max iteration is reached}
		{
			Randomly permute all blocks\;
			\For{each $\block_i$}
			{
				Solve the inference in \eqref{fh-eq:opt_step1_block} on $\block_i$ using graph cuts\;
			}
		}

\end{algorithm}

\section{Step 1: binary code inference}

\label{tsh-sec:step1}

When solving for the $r$-th bit, the binary codes of the previous $(r-1)$ bits are fixed,
and the bit length $m$ is set to $r$.
The binary code inference problem is:
 \begin{align}
 	\label{tsh-eq:opt_step1-onebit}
	\min_{\z_{(r)}  \in \{-1, 1\}^n }
    \sum_{i=1}^n\sum_{j=1}^n
    \relationindicator l_r(z_{r,i}, z_{r,j}; y_{ij}).
 \end{align}
Here $\z_{(r)}$ is the $n$-dimensional binary code vector we seek.
It represents the binary hash codes of the $n$ training data points for the $r$-th bit.
$z_{r,i}$ is the binary code of the $i$-th data point and the $r$-th bit.
$l_r$ represents the loss function output for the $r$-th bit, conditioning on previous bits:
\begin{align}
  l_r(z_{r,i}, z_{r,j}; y_{ij})=L(z_{r,i}, z_{r,j} ; \z_{i}^{(r-1)}, \z_{j}^{(r-1)} y_{ij}).
\end{align}
Here $\z_{i}^{(r-1)}$ is the binary code vector of the $i$-th data point in all previous $(r-1)$ bits.

Based on the following proposition, we are able to
rewrite the binary code inference problem with
any Hamming affinity (or distance) based loss function $L(\cdot)$
into a standard quadratic problem.
\begin{proposition}
\label{tsh-pro:p1}
 For any loss function $l(z_1,z_2)$ that is defined on a pair of binary input variables
 $z_1,z_2 \in \{-1,1\}$ and
 $l(1,1)=l(-1,-1)$, $l(1,-1)=l(-1,1)$,
we can define a quadratic function $g(z_1,z_2)$ that is equal to $l(z_1,z_2)$. %
We have following equations:
 \begin{align}
	l(z_1,z_2)& = \frac{1}{2} \biggr[ z_1 z_2 (l^{(11)} - l^{(-11)}) + l^{(11)} + l^{(-11)} \biggr], \notag \\
	& = \frac{1}{2} z_1 z_2 (l^{(11)} - l^{(-11)}) + {\rm const.} \notag \\
	& = g(z_1,z_2).
  \end{align}
Here $l^{(11)}, l^{(-11)}$ are constants, $l^{(11)}$ is the loss
output on identical input pair: $l^{(11)}=l(1,1)$, and $l^{(-11)}$ is
the loss output on distinct input pair: $l^{(-11)}=l(-1,1)$.
\end{proposition}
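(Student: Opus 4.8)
The plan is to exploit the fact that a function on the finite set $\{-1,1\}^2$ satisfying the stated symmetries is completely determined by just two numbers, and that these two configurations are separated exactly by the value of the product $z_1 z_2$. So the whole statement reduces to an elementary interpolation identity in a single variable.

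First I would observe that the hypotheses $l(1,1)=l(-1,-1)$ and $l(1,-1)=l(-1,1)$ mean that $l(z_1,z_2)$ equals $l^{(11)}$ whenever $z_1=z_2$ and equals $l^{(-11)}$ whenever $z_1\neq z_2$. In other words, $l$ depends only on the single quantity $z_1 z_2 \in \{-1,1\}$, which is $+1$ in the first case and $-1$ in the second; thus $z_1 z_2$ is a faithful indicator distinguishing the two cases, and $l$ factors through it.

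Next I would construct the unique affine function $f$ of one real variable $t$ with $f(1)=l^{(11)}$ and $f(-1)=l^{(-11)}$, namely $f(t)=\tfrac12(l^{(11)}-l^{(-11)})\,t+\tfrac12(l^{(11)}+l^{(-11)})$, and set $g(z_1,z_2):=f(z_1 z_2)$. Substituting $t=z_1 z_2$ yields precisely the two displayed expressions in the statement, with the constant being $\tfrac12(l^{(11)}+l^{(-11)})$. A one-line check on each of the four inputs (equivalently, evaluating $f$ at $t=+1$ and $t=-1$) confirms $g\equiv l$ on $\{-1,1\}^2$. Since $g$ is a degree-one polynomial in the bilinear monomial $z_1 z_2$, it is a quadratic (indeed bilinear-plus-constant) function of $(z_1,z_2)$, as claimed.

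I do not anticipate any genuine obstacle here: the only points that need care are recording that the two symmetry hypotheses are exactly what collapses the four-point domain onto the two values $l^{(11)}$ and $l^{(-11)}$, and clarifying that "quadratic" refers to the cross term $z_1 z_2$ (the pure squares $z_1^2$ and $z_2^2$ are identically $1$ on the domain, so they could be absorbed either way without changing $g$ as a function on $\{-1,1\}^2$). The remainder is routine verification.
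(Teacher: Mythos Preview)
Your proposal is correct and follows essentially the same approach as the paper: both arguments reduce to noting that the symmetry hypotheses collapse the four inputs to two values indexed by $z_1 z_2$, and then verifying the formula by exhaustive check on those two cases. Your interpolation framing is a nice way to motivate the specific form of $g$, but the underlying verification is identical to the paper's direct case check.
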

\begin{proof}
    This proposition can be easily proved by exhaustively checking all
    possible inputs of the loss function. Notice that there are only
    two possible output values of the loss function.
For the input $(z_1=1,z_2=1)$:
 \begin{align}
	g(1,1) & = \frac{1}{2} \biggr[ 1 \times 1 \times  (l^{(11)} - l^{(-11)}) + l^{(11)} + l^{(-11)} \biggr] \notag \\
	& = l(1,1), \notag
  \end{align}
For the input $(z_1=-1,z_2=1)$:
 \begin{align}
	g(-1,1) & = \frac{1}{2} \biggr[ -1 \times 1 \times (l^{(11)} - l^{(-11)}) + l^{(11)} + l^{(-11)} \biggr] \notag \\
	& = l(-1,1), \notag
  \end{align}
The input $(z_1=-1,z_2=-1)$ is the same as $(z_1=1,z_2=1)$ and the input $(z_1=1,z_2=-1)$ is the same as $(z_1=-1,z_2=1)$.
In conclusion, the function $l(\cdot,\cdot)$ and $g(\cdot,\cdot)$ have the same output for any possible inputs.
\end{proof}

Any hash loss function $l(\cdot, \cdot)$ which is defined on the Hamming affinity or
Hamming distance of data pairs is able to meet the
requirement that: $l(1,1)=l(-1,-1), l(1,-1)=l(-1,1)$.
Applying this
proposition, the optimization of \eqref{tsh-eq:opt_step1-onebit} can be equivalently
reformulated as:
\begin{subequations}
 \begin{align}
 	\label{tsh-eq:bqp-org}
	& \min_{\z_{(r)} \in \{-1, 1\}^n} \sum_{i=1}^n\sum_{j=1}^n a_{i,j} z_{r,i}z_{r,j}, \\
	& \text{where, } a_{i,j}=\relationindicator (l_{r,i,j}^{(11)} - l_{r,i,j}^{(-11)}), \label{eq:matrix_aij} \\
	& \quad l_{r,i,j}^{(11)}= l_{r}( 1, 1; y_{ij} ), \quad  l_{r,i,j}^{(-11)}= l_{r}(-1, 1; y_{ij} ).
	\label{eq:lrij}
 \end{align}
\end{subequations}
Here $a_{i,j}$ is constant.
The above optimization is an unconstrained binary quadratic problem.
It can be written in a matrix form:
\begin{subequations}
 \begin{align}
 	\label{tsh-eq:bqp}
	& \min_{\z_{(r)}} \z_{(r)}^\T \A \z_{(r)}, \\
    \st & \z_{(r)} \in \{-1, 1\}^n.
 \end{align}
\end{subequations}
Here the $ (i,j) $-th element of matrix $\A$ is defined by $a_{i,j}$ in \eqref{eq:matrix_aij}.
We have shown that the original optimization in
\eqref{tsh-eq:opt_step1-onebit} for one bit can be equivalently reformulated as a binary quadratic
problem (BQP) in \eqref{tsh-eq:bqp-org}. We discuss algorithms for solving this BQP in the next section.

Here we describe
a selection of such loss functions,
most of which arise from recent hashing methods.
These loss functions are defined on Hamming distance/affinity,
thus they are applicable to Proposition \ref{tsh-pro:p1}.
Recall that $m$ is the number of bits, $\dhamm(\cdot,\cdot)$ is the Hamming distance
and $\delta(\cdot) \in \{0, 1\}$ is an indicator function.

{\bf FastH-KSH}
The KSH loss function is based on Hamming affinity.
MDSH also uses a similar form of loss function (weighted Hamming affinity instead).
    \begin{align}
    \label{eq:loss-ksh}
	L_{\mathrm{KSH}}(\z_i, \z_j)= (my_{ij} - \z_i^\T\z_j)^2.
 \end{align}

{\bf FastH-Hinge}
The Hinge loss function is based on Hamming distance:
\begin{align}
\label{eq:loss-hinge}
	 L_{\mathrm{Hinge}}(\z_i, \z_j) =
	   \begin{cases}
	     [ 0 - \dhamm(\z_i,\z_j) ]^2 & \text{if } y_{ij}>0, \\
	     [ \max(0.5m - \dhamm(\z_i,\z_j), 0 ) ]^2  & \text{if } y_{ij}<0.
		\end{cases}
\end{align}

{\bf FastH-BRE}
The BRE loss function is based on Hamming distance:
 \begin{align}
 	\label{eq:loss-bre}
	L_{\mathrm{BRE}}(\z_i, \z_j)= [ m\delta(y_{ij}<0) - \dhamm(\z_i,\z_j) ]^2.
 \end{align}

{\bf FastH-ExpH}
Here ExpH is an exponential loss function using the Hamming distance:
\begin{align}
	\label{eq:loss-exph}
	L_{\mathrm{ExpH}}(\z_i, \z_j)= \exp [{y_{ij}\dhamm(\z_i,\z_j)/m +
    \delta(y_{ij}<0)} ].
 \end{align}

These loss functions are evaluated in the experiment section later.
It is worth noting that the Hinge loss \eqref{eq:loss-hinge} encourages the
Hamming distance of dissimilar pairs to be \emph{at least} more than \emph{half} of the bit length.
This is plausible because the Hamming distance of dissimilar pairs is only required to be large enough,
but not necessarily be the maximum value.
In contrast, other regression-like loss functions (e.g., KSH, BRE)
push the distance of dissimilar pairs to the maximum value (the bit length),
which may introduce unnecessary penalties.
As empirically verified in our experiments, the Hinge loss usually performs better.

One motivation of KSH in \cite{KSH} for using the Hamming affinity based loss function rather than the Hamming distance is that they can apply efficient optimization algorithms. However, here we show that both Hamming affinity and Hamming distance based loss functions can be easily solved in our general two-step framework using identical optimization techniques.

To apply the result of Proposition \ref{tsh-pro:p1} in \eqref{tsh-eq:bqp-org},
we take the KSH loss function as an example.
Recall that $l_r ( z_{r,i} , z_{r,j} ; y_{ij} )$ in \eqref{tsh-eq:opt_step1-onebit}
is the loss function output for the $r$-th bit and data pair $(i, j)$.
Using the KSH loss in \eqref{eq:loss-ksh}, we have:
\begin{align}
 l_r ( z_{r,i} , z_{r,j} ; y_{ij} ) =
 (ry_{ij} - \z_{i}^{(r-1) \T } \z_{j}^{(r-1)} - z_{r,i} z_{r,j} )^2
 \end{align}
Recall that $l^{(11)}$ is the loss
output on identical input pairs, and $l^{(-11)}$ is
the loss output on distinct input pairs.
With the above equation, we can write $l_{r,i,j}^{(11)}$ and $l_{r,i,j}^{(-11)}$ in \eqref{eq:lrij} for the KSH loss as:
\begin{subequations}
\begin{align}
 l_{r,i,j}^{(11)} & = l_{r} ( 1, 1; y_{ij} ) \notag \\
	& = (ry_{ij} - \z_{i}^{(r-1) \T } \z_{j}^{(r-1)} - 1 )^2;\\
 l_{r,i,j}^{(-11)} & = l_{r} ( -1, 1; y_{ij} )  \notag \\
 	& =  (ry_{ij} - \z_{i}^{(r-1) \T } \z_{j}^{(r-1)} + 1 )^2.
 \end{align}
 \end{subequations}
Finally, the matrix element $a_{i,j}$ in the BQP problem \eqref{tsh-eq:bqp-org} is written as:
\begin{subequations}
\label{eq:aij_ksh}
\begin{align}
 a_{i,j}^{\mathrm{KSH}}= & \relationindicator (l_{r,i,j}^{(11)} - l_{r,i,j}^{(-11)}) \\
 = & \relationindicator [ \, (ry_{ij} - \z_{i}^{(r-1) \T } \z_{j}^{(r-1)} - 1 )^2  \notag \\
 & \quad \quad - (ry_{ij} - \z_{i}^{(r-1) \T } \z_{j}^{(r-1)} + 1 )^2 \, ] \\
 = & -4 \relationindicator (ry_{ij} - \z_{i}^{(r-1) \T } \z_{j}^{(r-1)}).
 \end{align}
 \end{subequations}
 By substituting into \eqref{tsh-eq:bqp-org} and removing constant multipliers,
 we obtain the binary code inference problem for the $r$-th bit using the KSH loss function:
\begin{subequations}
\begin{align}
 	\label{tsh-eq:bqp-ksh}
	 & \min_{\z_r \in \{-1, 1\}^{n}} \ssum_{i=1}^n\sum_{j=1}^n
      a_{i,j} z_{r, i}z_{r, j}, \\
      \text{where,} \; & a_{i,j}  = -\relationindicator ( r y_{ij} -
      \ssum_{p=1}^{r-1} z^\ast_{p, i} z^\ast_{p, j} ). \label{tsh-eq:bqp-ksh2}
\end{align}
\end{subequations}
Here $z^\ast$ denotes the binary code in previous bits.

\subsection{Spectral method for binary inference}
To solve the BQP problem in \eqref{tsh-eq:bqp} for obtaining binary codes,
we first describe a simple spectral relaxation based method,
then present an efficient graph cut based method for large-scale inference.
Spectral relaxation drops the binary constraints. The optimization becomes:
 \begin{align}
 	\label{tsh-loss7}
	\min_{\z_{(r)}} \z_{(r)}^\T \A \z_{(r)}, \;\;\; \notag \\
    \st \;\; \|\z_{(r)}\|_2^2=n.
 \end{align}
The solution (denoted $\z_{(r)}^0$) of the above optimization is
simply the eigenvector that corresponds to the minimum eigenvalue of
the matrix $\A$.
To achieve a better solution,
we can solve the following relaxed problem of
\eqref{tsh-eq:bqp}:
 \begin{align}
 	\label{tsh-loss8}
	\min_{\z_{(r)}} \z_{(r)}^\T \A \z_{(r)}, \;\;\; \notag \\
    \st \;\; \z_{(r)} =[-1, 1]^n.
 \end{align}
We use the solution $\z_{(r)}^0$ of
spectral relaxation in \eqref{tsh-loss7} as an initialization and solve the above problem
using the efficient LBFGS-B solver \cite{lbfgs}.
The solution is then thresholded at $0$ to output the final binary codes.

\subsection{Block GraphCut for binary code inference}

\label{tsh-sec:step1-gc}

We have shown that the simple spectral method can be used
to solve the binary code inference problem in \eqref{tsh-eq:bqp}.
However solving eigenvalue problems does not scale up to large training sets,
and the loose relaxation leads to inferior results.
Here we propose sub-modular formulations
and an efficient graph cut based block search method for solving large-scale inference problems.
This block search method is much more efficient than the spectral method and able to achieve better solutions.

Specifically, we first group data points into a number of blocks,
then iteratively optimize for these blocks until converge.
At each iteration,
we randomly pick one block,
then optimize for (update) the corresponding binary variables of this block,
conditioning on
the remaining variables.
In other words,
when optimizing for one block, only those binary variables that correspond to the data points of the target block will be updated;
and for the variables that are not involved in the target block,
their values remain unchanged.
Clearly each block update would strictly decrease the objective.

Formally, let $\block$ denote a block of data points.
We want to optimize for the corresponding binary variables of the block $\block$.
We denote by $\hat z_r$ a binary code in the $r$-bit that is not involved in the target block.
First we rewrite the objective in \eqref{tsh-eq:bqp-org}
to separate the variables of the target block from other variables.
The objective in \eqref{tsh-eq:bqp-org} can be rewritten as:
\begin{subequations}
\begin{align}
	& \sum_{i=1}^n\sum_{j=1}^n a_{i,j} z_{r, i}z_{r, j} \\
      = & \sum_{i \in \block} \sum_{j \in \block} a_{i,j} z_{r, i}z_{r, j}
      	 + \sum_{i \in \block} \sum_{j \notin \block} a_{i,j} z_{r, i} \hat z_{r, j} \notag \\
      	 & \;\;\; + \sum_{i \notin \block} \sum_{j \in \block} a_{i,j}  z_{r, i} \hat z_{r, j}
      	 + \sum_{i \notin \block} \sum_{j \notin \block} a_{i,j} \hat  z_{r, i} \hat  z_{r, j} \\
      	 = & \sum_{i \in \block} \sum_{j \in \block} a_{i,j} z_{r, i} z_{r, j}
      	 + 2 \sum_{i \in \block} \sum_{j \notin \block} a_{i,j} z_{r, i} \hat z_{r, j} \notag \\
      	 & \;\;\; +  \sum_{i \notin \block} \sum_{j \notin \block} a_{i,j} \hat  z_{r, i} \hat  z_{r, j}.
\end{align}
\end{subequations}
When optimizing for one block, those variables which are not involved in the target block are treated as constants;
hence $\hat z_r$ is treated as a constant.
By removing the constant part, the optimization for one block is:
\begin{align}
 	\label{fh-eq:opt_step1_block_tmp}
	\min_{\z_{r, \block} \in \{-1, 1\}^{|\block|}}
	\sum_{i \in \block} \sum_{j \in \block} a_{i,j} z_{r, i} z_{r, j}
      	 + 2 \sum_{i \in \block} \sum_{j \notin \block} a_{i,j} z_{r, i} \hat z_{r, j}.
\end{align}
We aim to optimize $\z_{r, \block}$ which is a vector of variables that are involved in the target block $\block$.
 Substituting the constant $a_{i,j}$ by its definition in \eqref{eq:matrix_aij},
 the above optimization is written as:
\begin{subequations}
\label{fh-eq:opt_step1_block_all}
\begin{align}
 	\label{fh-eq:opt_step1_block}
	 \min_{\z_{r, \block} \in \{-1, 1\}^{|\block|}}  & \sum_{i \in \block} u_i z_{r,i}
	+  \sum_{i \in \block}\sum_{j \in \block}
      v_{ij} z_{r,i} z_{r,j}, \\
       \text{where,}  \;   v_{ij} & = \relationindicator (l_{r,i,j}^{(11)} - l_{r,i,j}^{(-11)})
       \label{fh-eq:opt_step1_block2} \\
         u_i & =  2 \sum_{j \notin \block} \hat  z_{r,j} \relationindicator (l_{r,i,j}^{(11)} - l_{r,i,j}^{(-11)}).
         \label{fh-eq:opt_step1_block3}
\end{align}
\end{subequations}
Here $u_i, v_{ij}$ are constants.
The key to constructing a block is to ensure \eqref{fh-eq:opt_step1_block} for
such a block is sub-modular, thus we are able to apply the efficient graph cut method.
We refer to this as Block GraphCut (Block-GC), shown in Algorithm \ref{fh-alg:step1}.
Specifically in our hashing problem, by leveraging similarity information, we can easily
construct blocks to meet the sub-modularity requirement.
Here we assume that the loss function satisfies the following conditions:
\begin{align}
	\label{eq: loss_assume}
	\forall y_{ij} \geq 0 & \text{ and } \forall r : \notag \\
	& a_{i,j}=\relationindicator (l_{r,i,j}^{(11)} - l_{r,i,j}^{(-11)}) \leq 0,
\end{align}
which intuitively means that, for two similar data points, the loss of assigning identical binary values for one bit is smaller than assigning distinct binary values.
As loss functions always encourage two similar data points to have similar binary codes,
this condition can be naturally satisfied.
All of the loss functions (e.g., KSH, BRE, Hinge) that we described before meet this requirement.
As an example, the definition of $a_{i,j}^{\rm KSH}$ in \eqref{eq:aij_ksh} for the KSH loss satisfies the above conditions.
The following proposition shows how to construct such a block:
\begin{proposition}
\label{fh-pro:p1}
	$\forall i,j \in \block$, if $y_{ij} \geq 0 $,
	the optimization in \eqref{fh-eq:opt_step1_block} is a sub-modular problem.
In other words, for any data point in the block, if it is {\emph not} dissimilar with any other data points in the block,
   then \eqref{fh-eq:opt_step1_block} is sub-modular.
\end{proposition}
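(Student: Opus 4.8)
The plan is to invoke the standard characterization of submodularity for a pseudo-Boolean quadratic function. A function of the form $f(\z) = \sum_i u_i z_i + \sum_{i,j} v_{ij} z_i z_j$ over binary variables $z_i \in \{-1,1\}$ is submodular (equivalently, graph-representable for min-cut in the sense of \cite{boykov2001fast}) precisely when every pairwise coefficient satisfies the appropriate sign condition; in the $\{-1,1\}$ encoding the relevant requirement is that each quadratic term be ``attractive'', i.e. $v_{ij} \le 0$ for all $i \ne j$ in the block. Since the linear terms $u_i z_{r,i}$ (the unary potentials) never affect submodularity, the whole question reduces to checking the sign of the $v_{ij}$.

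First I would recall from \eqref{fh-eq:opt_step1_block2} that $v_{ij} = \relationindicator (l_{r,i,j}^{(11)} - l_{r,i,j}^{(-11)}) = a_{i,j}$, exactly the quantity appearing in \eqref{eq:matrix_aij}. Next I would use the hypothesis of the proposition: for every pair $i,j \in \block$ we have $y_{ij} \ge 0$, since no data point in the block is dissimilar with any other (the block-construction in Algorithm \ref{fh-alg:block} enforces precisely this). Then I would apply the assumed loss condition \eqref{eq: loss_assume}, which states that whenever $y_{ij} \ge 0$ one has $a_{i,j} = \relationindicator (l_{r,i,j}^{(11)} - l_{r,i,j}^{(-11)}) \le 0$ for every bit index $r$. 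Combining these two facts gives $v_{ij} \le 0$ for all $i,j \in \block$, and by the submodularity criterion for quadratic pseudo-Boolean functions this makes \eqref{fh-eq:opt_step1_block} submodular, hence solvable exactly by graph cuts. I would also note that all the loss functions exhibited earlier (KSH, BRE, Hinge, ExpH) satisfy \eqref{eq: loss_assume}, e.g. from the explicit form of $a_{i,j}^{\mathrm{KSH}}$ in \eqref{eq:aij_ksh}, so the hypothesis is not vacuous.

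The only real subtlety — more a matter of stating things cleanly than of genuine difficulty — is making precise the submodularity criterion in the $\{-1,1\}$ encoding rather than the more familiar $\{0,1\}$ encoding. Concretely, under the affine change of variables $z_i = 2b_i - 1$ with $b_i \in \{0,1\}$, a term $v_{ij} z_i z_j$ becomes $4 v_{ij} b_i b_j$ plus terms linear in $b_i, b_j$ plus a constant; the standard submodularity condition $\theta_{00} + \theta_{11} \le \theta_{01} + \theta_{10}$ for the resulting pairwise potential then reduces to $4 v_{ij} \le 0$, i.e. $v_{ij} \le 0$. So the argument is: translate to $\{0,1\}$ variables, observe the pairwise cost of each edge is governed by $v_{ij}$, check the sign, and conclude. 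I expect the proof itself to be short; the main ``obstacle'' is simply citing the right submodularity-to-graph-cut correspondence and carrying the sign through the encoding change correctly.
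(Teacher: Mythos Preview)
Your proposal is correct and follows essentially the same route as the paper: both arguments use the hypothesis $y_{ij}\ge 0$ together with the loss assumption \eqref{eq: loss_assume} to conclude $v_{ij}\le 0$, and then invoke the standard submodularity criterion for pairwise pseudo-Boolean terms. The only cosmetic difference is that the paper verifies the inequality $\theta_{i,j}(1,1)+\theta_{i,j}(-1,-1)\le \theta_{i,j}(1,-1)+\theta_{i,j}(-1,1)$ directly in the $\{-1,1\}$ encoding by evaluating the four values of $\theta_{i,j}(z_{r,i},z_{r,j})=v_{ij}z_{r,i}z_{r,j}$, whereas you propose passing through the affine change to $\{0,1\}$ variables first; either way the content is identical.
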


\begin{proof}
If  $y_{ij} \geq 0$, according to the conditions in \eqref{eq: loss_assume},
we have:
\begin{align}
	v_{ij}  = \relationindicator (l_{r,i,j}^{(11)} - l_{r,i,j}^{(-11)}) \leq 0.
\end{align}
With the following definition:
\begin{align}
	\theta_{i,j}(z_{r,i},z_{r,j})=v_{ij} z_{r,i} z_{r,j},
\end{align}
the following holds:
\begin{align}
	& \theta_{i,j}(-1,1) = \theta_{i,j}(1,-1)  =  -v_{ij} \geq 0; \\
	& \theta_{i,j}(1,1) = \theta_{i,j}(-1,-1)  =  v_{ij} \leq 0.
\end{align}
Hence we have the following relations: $ \forall i,j \in \block $:
\begin{align}
	\theta_{i,j}(1,1) + \theta_{i,j}(-1,-1) \leq 0 \leq \theta_{i,j}(1,-1) + \theta_{i,j}(-1,1),
\end{align}
which prove the sub-modularity of \eqref{fh-eq:opt_step1_block} \cite{rother2007optimizing}.
\end{proof}
Blocks can be constructed in many ways as long as they satisfy the condition in Proposition \ref{fh-pro:p1}.
A simple greedy method is shown in Algorithm \ref{fh-alg:block}.
Note that one block can overlap with another and the union of all blocks needs to cover all $n$ variables.
If one block only consist of one variable, Block-GC becomes the ICM method (\cite{besag1986statistical, UGM}) which optimizes for one variable at a time.

\section{Step 2: hash function learning}

\label{tsh-sec:step2}

The second step is to solve a binary classification problem for learning one hash function.
The binary codes obtained in the first step are used as the classification labels.
Any binary classifiers (e.g., decision trees, neural networks) and
any advanced large-scale training techniques can be directly applied to hash function learning at this step.
For the $r$-th bit, the classification problem is:
\begin{align}
	\min_{h_r(\cdot)} \;\; & \ssum_{i=1}^n \delta( z_{r,i} = h_r(\x_i) ).
    \label{tsh-eq:opt_step2_onebit}
\end{align}
Usually the zero-one loss in the above problem is replaced by some convex surrogate loss.
For example, when training a perceptron hash function:
\begin{align}
	\label{eq:hashfun_lsvm}
	h(\x) = \sign (\w^\T\x + b),
 \end{align}
we can train a linear SVM classifier by solving:
\begin{align}
	\min_{\w, b} \;\; & \frac{1}{2}\|\w\|^2 + \ssum_{i=1}^n \max[1-z_{r, i} (\w^\T \x +b), 0].
    \label{eq:lsvm}
\end{align}

Any binary classifier can be applied here.
We could also train an kernel SVM to learn a kernel hash function:
 \begin{align}
 	\label{eq:hashfun_kernel}
	h(\x) = \sign\left[  \sum_{q=1}^Q w_q\kappa(\x_q', \x) + b \right],
 \end{align}
in which $\cX'=\{\x'_1,\dots,\x'_Q\}$ are $Q$ support vectors.
Sophisticated kernel learning methods can be applied here, for example, LIBSVM or
the stochastic kernel SVM training method with a support vector budget in \cite{wang2012breaking}.

After learning the hash function for one bit,
the binary code is updated by applying the learned hash function.
Hence, the learned hash function is able to influence the learning of the next bit.

\subsection{Boosted trees as hash functions}

\label{tsh-sec:step2-tree}

Decision trees could be a good choice for hash functions with nonlinear mapping.
Compared to kernel method, decision trees only involve simple comparison
operations for evaluation; thus they are much more efficient for testing, especially on high-dimensional data.
We define one hash function as a linear combination of trees:
\begin{align}
 	\label{eq:hashfun_btree}
	h(\x)=\sign \biggr[ \ssum_{q=1}^Q  w_q \tree_q(\x) \biggr].
\end{align}
Here $Q$ is the number of decision trees;
$\tree(\cdot) \in \{-1, 1\}$ denotes a tree function with binary output.
We train a boosting classifier to learn
the weighting coefficients and trees for one hash function.
The classification problem for the $r$-th hash function is written as:
\begin{align}
 	\label{eq:adaboost}
	\min_{\w \geq 0} \ssum_{i=1}^n \exp \biggr[-z_{r,i} \ssum_{q=1}^Q  w_q \tree_q(\x_i) \biggr].
\end{align}
We apply Adaboost to solve the problem.
At each boosting iteration, a decision tree as well as its weighting coefficient is learned.
Every node of a binary decision tree is a decision stump.
Training a stump is to find a feature dimension and threshold that minimizes the weighted classification error.
From this point of view, we are performing feature selection and hash function learning at the same time.
We can easily make use of efficient decision tree learning techniques available in the literature.
Here we summarize some techniques that are included in our implementation:

1) We use the efficient stump implementation proposed in the recent work of \cite{appelquickly}, which is around 10 times faster than conventional implementation.

2) Feature quantization can significantly speed up tree training without noticeable performance loss in practice, and also largely reduce the memory consuming. As in \cite{appelquickly}, we linearly quantize feature values into $256$ bins.

3) We apply the weight-trimming technique described in \cite{friedman2000additive,appelquickly}.
At each boosting iteration, the smallest $10\%$ weightings are trimmed (set to $0$).

4) We apply the LazyBoost technique to speed up the tree learning process.
For one node splitting in tree training, only a random subset of feature dimensions are evaluated for splitting.

\begin{figure}[t]
    \centering

        \figcenter{\includegraphics[height=.3in]{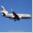}}
        \colseperator
        \figcenter{\includegraphics[width=.85\linewidth]{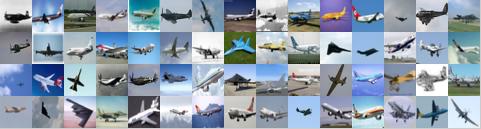}}

        \rowseperator
        \figcenter{\includegraphics[height=.3in]{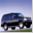}}
        \colseperator
        \figcenter{\includegraphics[width=.85\linewidth]{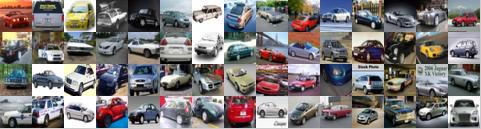}}

        \rowseperator
        \figcenter{\includegraphics[height=.3in]{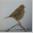}}
        \colseperator
        \figcenter{\includegraphics[width=.85\linewidth]{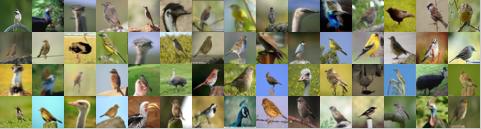}}

        \rowseperator
        \figcenter{\includegraphics[height=.3in]{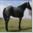}}
        \colseperator
        \figcenter{\includegraphics[width=.85\linewidth]{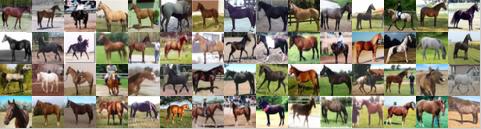}}

          \rowseperator
        \figcenter{\includegraphics[height=.3in]{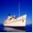}}
        \colseperator
        \figcenter{\includegraphics[width=.85\linewidth]{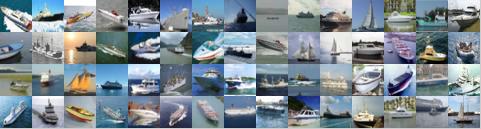}}

    \caption{Some retrieval examples of our method \fasth on CIFAR10. The first column shows query images, and the rest are retrieved images in the database.
    }
    \label{fh-fig:example_cifar}
\end{figure}

\begin{figure*}[t]

    \centering

  \includegraphics[width=.245\linewidth]{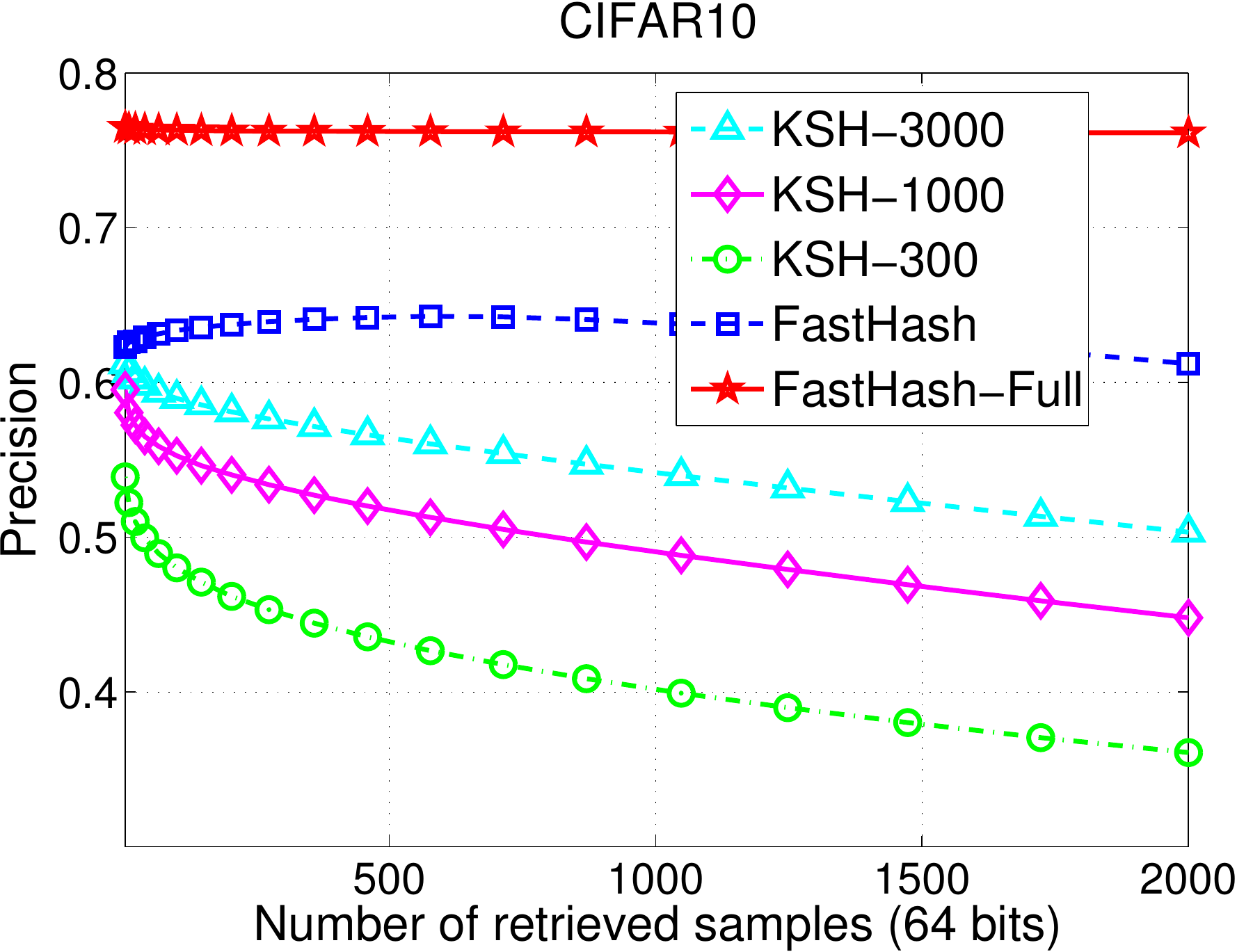}
  \includegraphics[width=.245\linewidth]{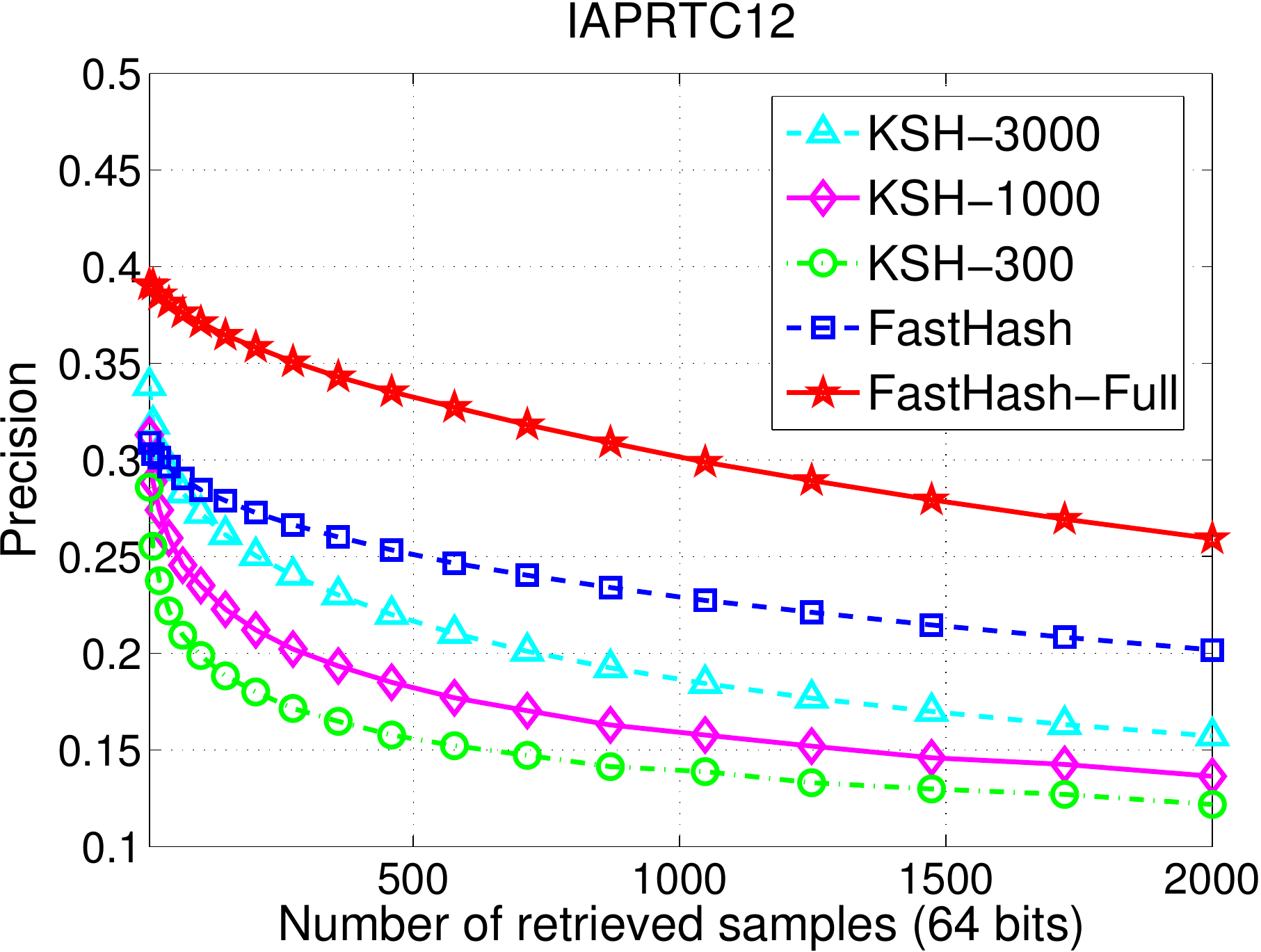}
  \includegraphics[width=.245\linewidth]{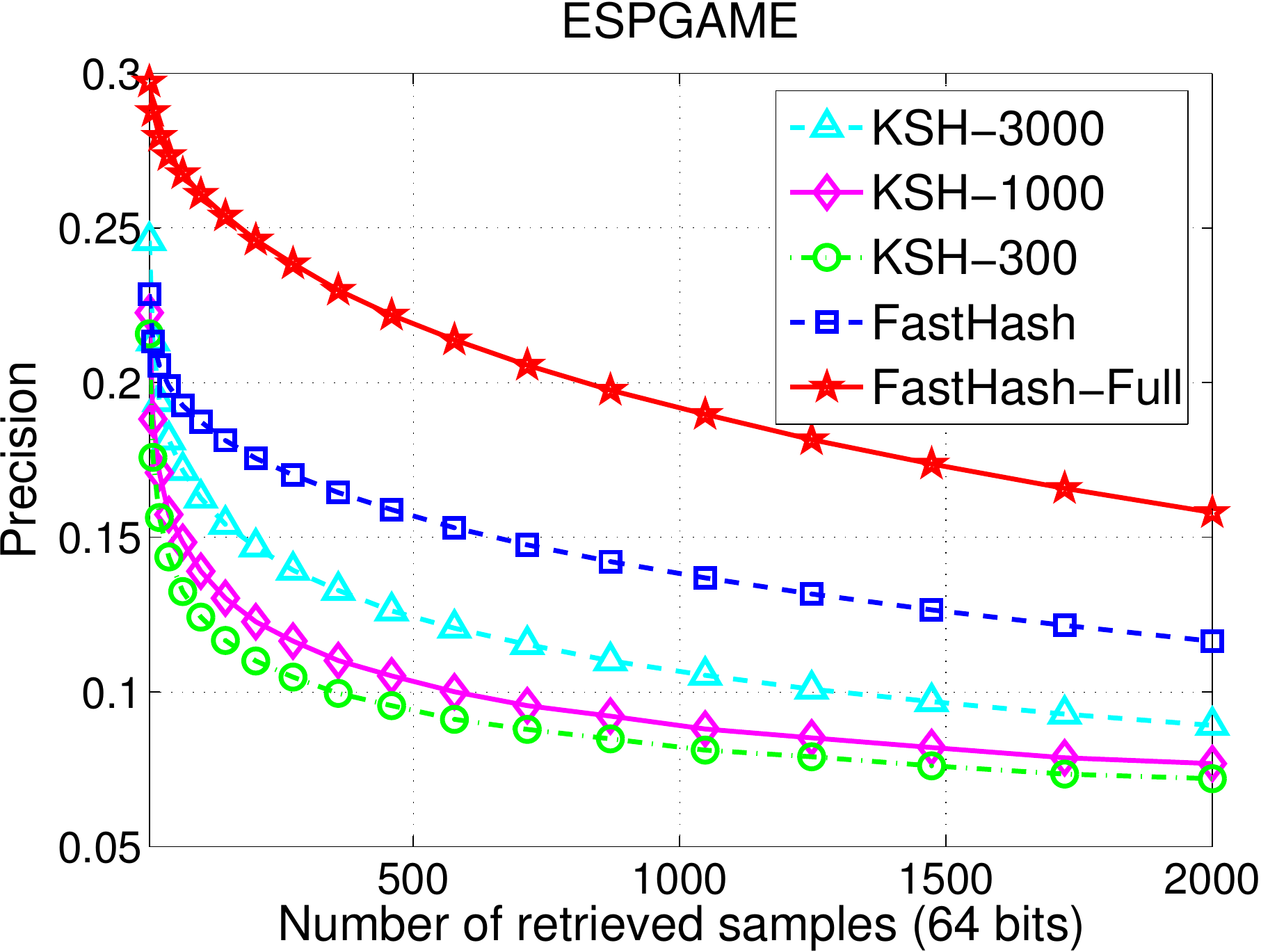}
  \includegraphics[width=.245\linewidth]{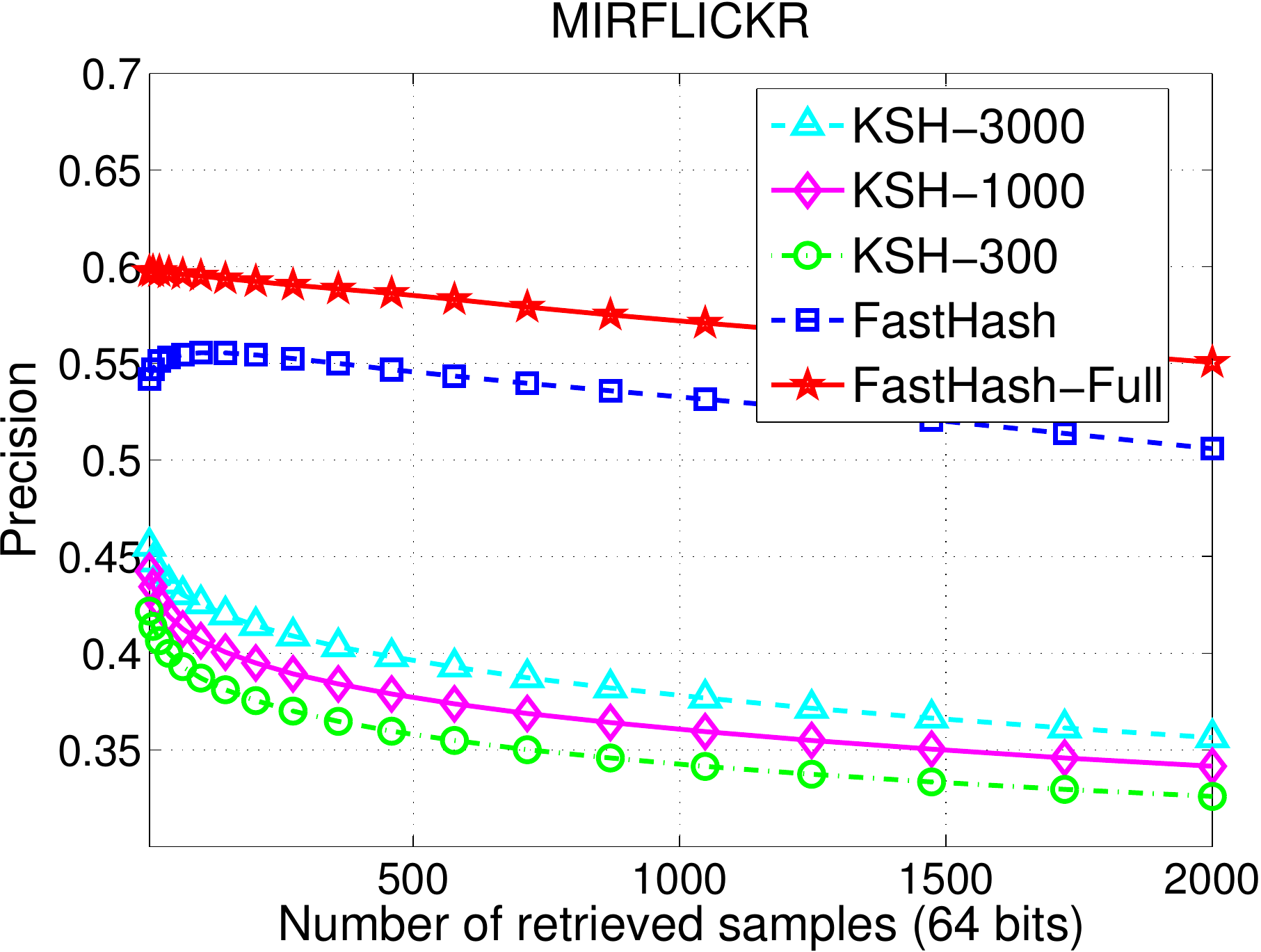}

  \caption{Comparison of KSH and our \fasth on all datasets.
    The number after ``KSH'' is the number of support vectors.
Both of our \fasth and FastHash-Full significantly outperform KSH.}
    \label{fh-fig:ksh}
\end{figure*}

\begin{table}[t]
\caption{Comparison of KSH and our \fasth. KSH results
are presented with different numbers of support vectors.
Both of our \fasth and FastHash-Full significantly outperform KSH in terms of training time, binary encoding time (test time) and retrieval precision.}
\centering
 \resizebox{.98\linewidth}{!}
  {
  \begin{tabular}{ l | l c | l c c}
  \hline\hline
Method  &\#Train  &\#Support Vector &Train time &Test time  &Precision\\
\hline
\multicolumn{6}{  c }{CIFAR10 (features:11200)} \\ \hdashline
KSH &5000 &300  &1082 &22 &0.480\\
KSH &5000 &1000 &3481 &57 &0.553\\
KSH &5000 &3000 &52747  &145  &0.590\\
\best FastH &5000 &N/A  &331  &21 &\best 0.634\\
\best FastH-Full  &50000  &N/A  &1794 &21 &\best 0.763\\
\hline
\multicolumn{6}{  c }{IAPRTC12 (features:11200)} \\ \hdashline
KSH &5000 &300  &1129 &7  &0.199\\
KSH &5000 &1000 &3447 &21 &0.235\\
KSH &5000 &3000 &51927  &51 &0.273\\
\best FastH &5000 &N/A  &331  &9  &\best 0.285\\
\best FastH-Full  &17665  &N/A  &620  &9  &\best 0.371\\
\hline
\multicolumn{6}{  c }{ESPGAME (features:11200)} \\ \hdashline
KSH &5000 &300  &1120 &8  &0.124\\
KSH &5000 &1000 &3358 &22 &0.139\\
KSH &5000 &3000 &52115  &46 &0.163\\
\best FastH &5000 &N/A  & 309 &9  &\best 0.188\\
\best FastH-Full  &18689  &N/A  &663  &9  &\best 0.261\\
\hline
\multicolumn{6}{  c }{MIRFLICKR (features:11200)} \\ \hdashline
KSH &5000 &300  &1036 &5  &0.387\\
KSH &5000 &1000 &3337 &13 &0.407\\
KSH &5000 &3000 &52031  &42 &0.434\\
\best FastH &5000 &N/A  &278  &7  &\best0.555\\
\best FastH-Full  &12500  &N/A  &509  &7  &\best 0.595\\

  \hline \hline
  \end{tabular}
  }
\label{fh-tab:ksh}
\end{table}

\section{Experiments}

To evaluate
the proposed method,
here we present the results of comprehensive experiments on several large image datasets.
The evaluation measures include training time, binary encoding time and retrieval accuracy.
We compare to a number of recent supervised and unsupervised hashing methods.
To explore our method with different settings,
we perform comparisons of using different binary code inference algorithms,
various kinds of loss functions and hash functions.

The similarity preserving performance is evaluated
in small binary codes based image retrieval \cite{torralba2008small,wang2010semi}.
Given a query image, the retrieved images in the database are returned by hamming distance ranking
based on their binary codes.
The retrieval quality is measured in 3 different aspects: the precision of the top-K ($K$=$100$)
retrieved examples (denoted as Precision), mean average precision (MAP) and the area under the Precision-Recall curve (Prec-Recall).
The training time and testing time (binary encoding time) are recorded in seconds.

Results are reported on 6 large image datasets which cover a wide variety of images.
The dataset CIFAR10
\footnote{\url{http://www.cs.toronto.edu/~kriz/cifar.html}}
contains $60,000$ images in small resolution.
The multi-label datasets IAPRTC12 and ESPGAME \cite{guillaumin2009tagprop}
contain around $20,000$ images, and MIRFLICKR \cite{huiskes2008mir} is a collection of $25000$ images.
SUN397 \cite{xiao2010sun}
contains more than $100,000$ scene images
form $397$ categories.
The large dataset ILSVRC2012 contains roughly $1.2$ million images of $1000$ categories from ImageNet \cite{imagenet}.

For the multi-class datasets: CIFAR10, SUN397 and ILSVRC2012, the ground truth pairwise similarity is defined as multi-class label agreement. For multi-label datasets: IAPRTC12, ESPGAME and MIRFLICKR, of which the keyword (tags) annotation is provided in \cite{guillaumin2009tagprop}, two images are considered as semantically similar if they are annotated with at least $2$ identical keywords (or tags).
In the training stage of supervised methods,
a maximum number of 100 similar and dissimilar neighbors are defined for each example;
hence the pairwise similarity label matrix is sparse.

Following the conventional protocol in \cite{KSH,kulis2009learning,wang2010semi} for hashing method evaluation,
a large portion of the dataset is
allocated as an image database for training and retrieval,
and the rest is put aside as test queries.
Specifically, for CIFAR10, IAPRTC12, ESPGAME and MIRFLICKER, the training data in the the provided split are used as image database and the test data are used as test queries.
The splits for SUN397 and ILSVRC2012 are described in their corresponding sections.

\begin{table*}[t]
\caption{Results using two types of features: low-dimensional GIST features and the
  high-dimensional codebook features. Our \fasth and FastHash-Full significantly outperform
  the comparators on both feature types.
  In terms of training time, our \fasth is also much faster than others on the high-dimensional codebook features.}
\centering
\resizebox{.95\linewidth}{!}
  {
  \begin{tabular}{ c l || l l c c c | l l c c c}
  \hline \hline
    & & \multicolumn{5}{ c | }{GIST feature (320 / 512 dimensions)}
& \multicolumn{5}{ c }{Codebook feature (11200 dimensions)}
  \\ \hline
Method &\#Train &Train time &Test time  &Precision  &MAP  &Prec-Recall  &Train time (s) &Test time (s)
 &Precision  &MAP  &Prec-Recall
\\
\hline
\multicolumn{12}{ c }{CIFAR10} \\  \hdashline
KSH &5000 &52173  &8  &0.453  &0.350  &0.164  &52747  &145  &0.590  &0.464  &0.261  \\
BREs  &5000 &481  &1  &0.262  &0.198  &0.082  &18343  &8  &0.292  &0.216  &0.089  \\
SPLH  &5000 &102  &1  &0.368  &0.291  &0.138  &9858 &4  &0.496  &0.396  &0.219  \\
STHs  &5000 &380  &1  &0.197  &0.151  &0.051  &6878 &4  &0.246  &0.175  &0.058  \\
\best FastH &5000 &304  &21 &\best 0.517  &\best 0.462  &\best 0.243  &331  &21 &\best 0.634  &\best 0.575  &\best 0.358  \\
\best FastH-Full  &50000  &1681 &21 &\best 0.649  &\best 0.653  &\best 0.450  &1794 &21 &\best 0.763  &\best 0.775  &\best 0.605  \\
\hline
\multicolumn{12}{ c }{IAPRTC12} \\  \hdashline
KSH &5000 &51864  &5  &0.182  &0.126  &0.083  &51927  &51 &0.273  &0.169  &0.123  \\
BREs  &5000 &6052 &1  &0.138  &0.109  &0.074  &6779 &3  &0.163  &0.124  &0.097  \\
SPLH  &5000 &154  &1  &0.160  &0.124  &0.084  &10261  &2  &0.220  &0.157  &0.119  \\
STHs  &5000 &628  &1  &0.099  &0.092  &0.062  &10108  &2  &0.160  &0.114  &0.076  \\
\best FastH &5000 &286  &9  &\best 0.232  &\best 0.168  &\best 0.117  &331  &9  &\best 0.285  &\best 0.202  &\best 0.146  \\
\best FastH-Full  &17665  &590  &9  &\best 0.316  &\best 0.240  &\best 0.178  &620  &9  &\best 0.371  &\best 0.276  &\best 0.210  \\
\hline
\multicolumn{12}{ c }{ESPGAME} \\  \hdashline
KSH &5000 &52061  &5  &0.118  &0.077  &0.054  &52115  &46 &0.163  &0.100  &0.072  \\
BREs  &5000 &714  &1  &0.095  &0.070  &0.050  &16628  &3  &0.111  &0.076  &0.059  \\
SPLH  &5000 &185  &1  &0.116  &0.083  &0.062  &11740  &2  &0.148  &0.104  &0.074  \\
STHs  &5000 &616  &1  &0.061  &0.047  &0.033  &11045  &2  &0.087  &0.064  &0.042  \\
\best FastH &5000 &289  &9  &\best 0.157  &\best 0.106  &\best 0.070  &309  &9  &\best 0.188  &\best 0.125  &\best 0.081  \\
\best FastH-Full  &18689  &448  &9  &\best 0.228  &\best 0.169  &\best 0.109  &663  &9  &\best 0.261  &\best 0.189  &\best 0.126  \\
\hline
\multicolumn{12}{ c }{MIRFLICKR} \\  \hdashline
KSH &5000 &51983  &3  &0.379  &0.321  &0.234  &52031  &42 &0.434  &0.350  &0.254  \\
BREs  &5000 &1161 &1  &0.347  &0.310  &0.224  &13671  &2  &0.399  &0.345  &0.250  \\
SPLH  &5000 &166  &1  &0.379  &0.337  &0.241  &9824 &2  &0.444  &0.391  &0.277  \\
STHs  &5000 &613  &1  &0.268  &0.261  &0.172  &10254  &2  &0.281  &0.272  &0.174  \\
\best FastH &5000 &307  &7  &\best 0.477  &\best 0.429  &\best 0.299  &338  &7  &\best 0.555  &\best 0.487  & \best 0.344 \\
\best FastH-Full  &12500  &451  &7  &\best 0.525  &\best 0.507  &\best 0.345  & 509 &7  &\best 0.595  &\best 0.558  &\best 0.420  \\
  \hline \hline
  \end{tabular}
  }
\label{fh-tab:features}
\end{table*}

We extract codebook-based features following the conventional pipeline from \cite{coates2011importance, kiros12}:
we employ K-SVD for codebook (dictionary) learning with a codebook size of $800$, soft-thresholding for patch encoding and spatial pooling of $3$ levels, which results $11200$-dimensional features. For further evaluation, we
increase the codebook size to $1600$ to generate $22400$-dimensional features.
We also extract the low-dimensional GIST \cite{gist} features ($512$ or $320$ dimensions) for evaluation.
For the dataset ILSVRC2012, we extract the convolution neural network features with $4096$ dimensions
(\cite{krizhevsky2012imagenet,Donahue14}), which is described in detail in the corresponding section.

If not specified,
we use the following in our method:
the KSH loss function (described in \eqref{eq:loss-ksh}), the proposed Block GraphCut algorithm in Step-1, and the decision tree hash function in Step-2.
The tree depth is set to $4$, and the number of boosting iterations is $200$.
Different settings of hash functions or loss functions will be evaluated in the later sections.
For comparison methods, we follow the original papers for parameter setting.
If not specified, $64$-bit binary codes are generated for evaluation.

\begin{figure*}[t]
    \centering

   \includegraphics[width=.245\linewidth]{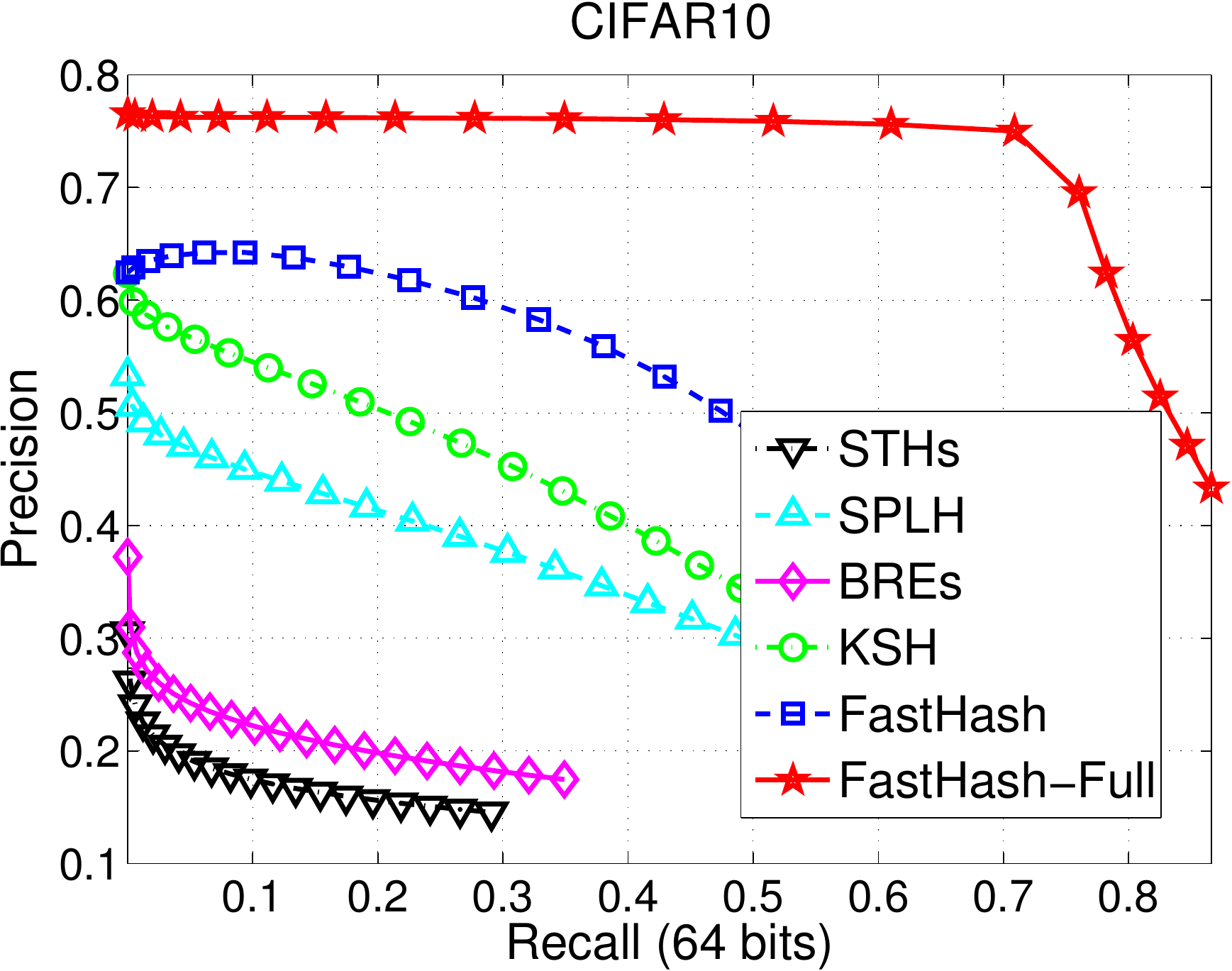}
   \includegraphics[width=.245\linewidth]{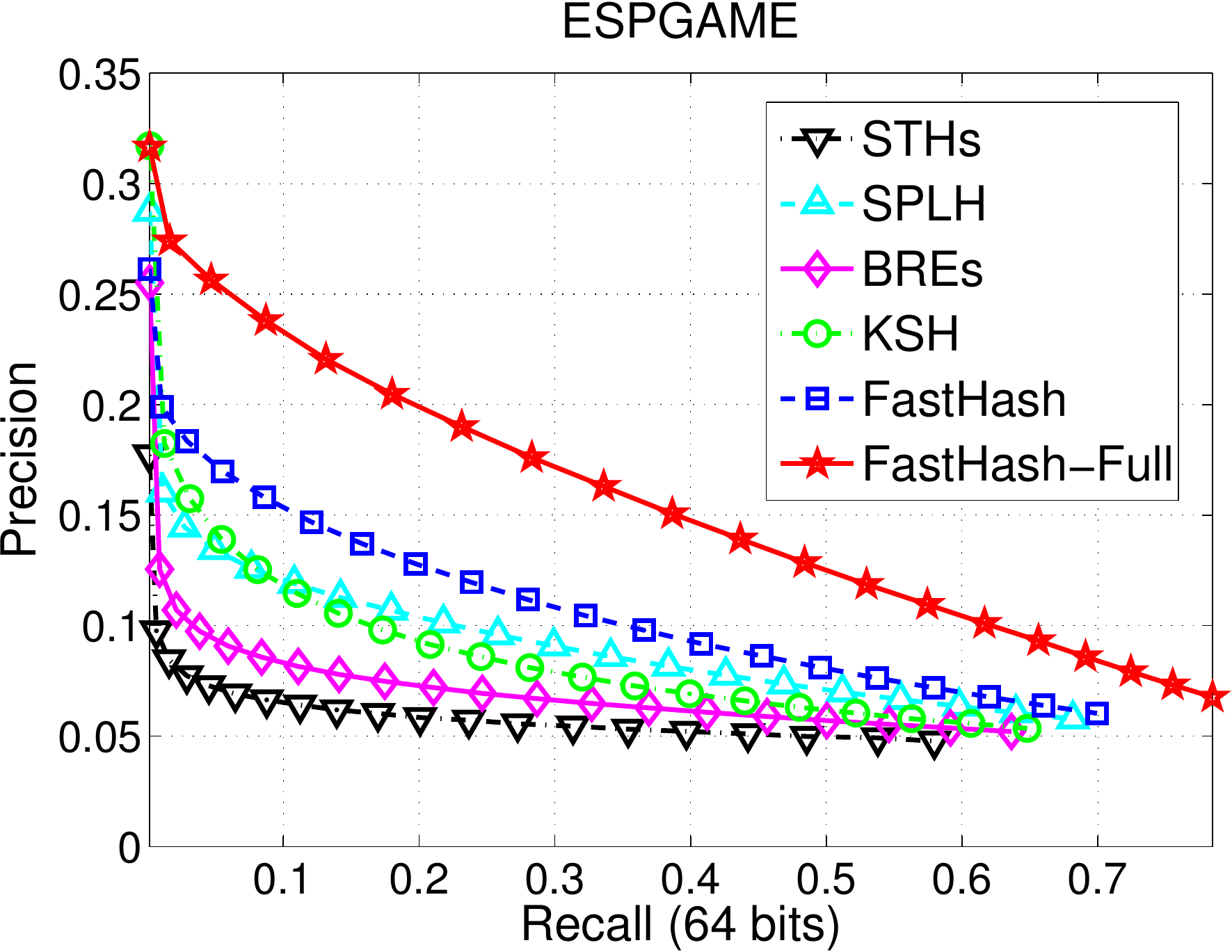}
   \includegraphics[width=.245\linewidth]{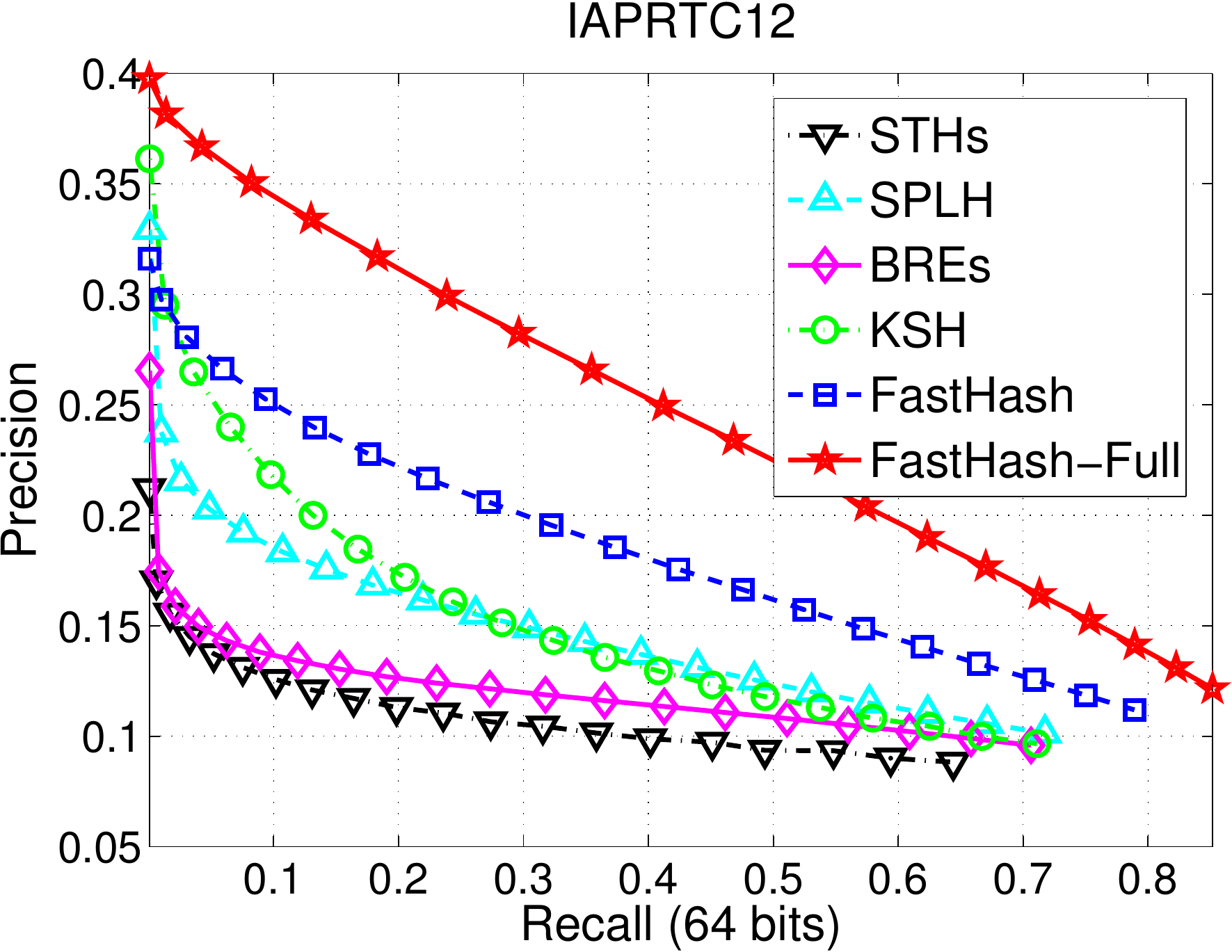}
   \includegraphics[width=.245\linewidth]{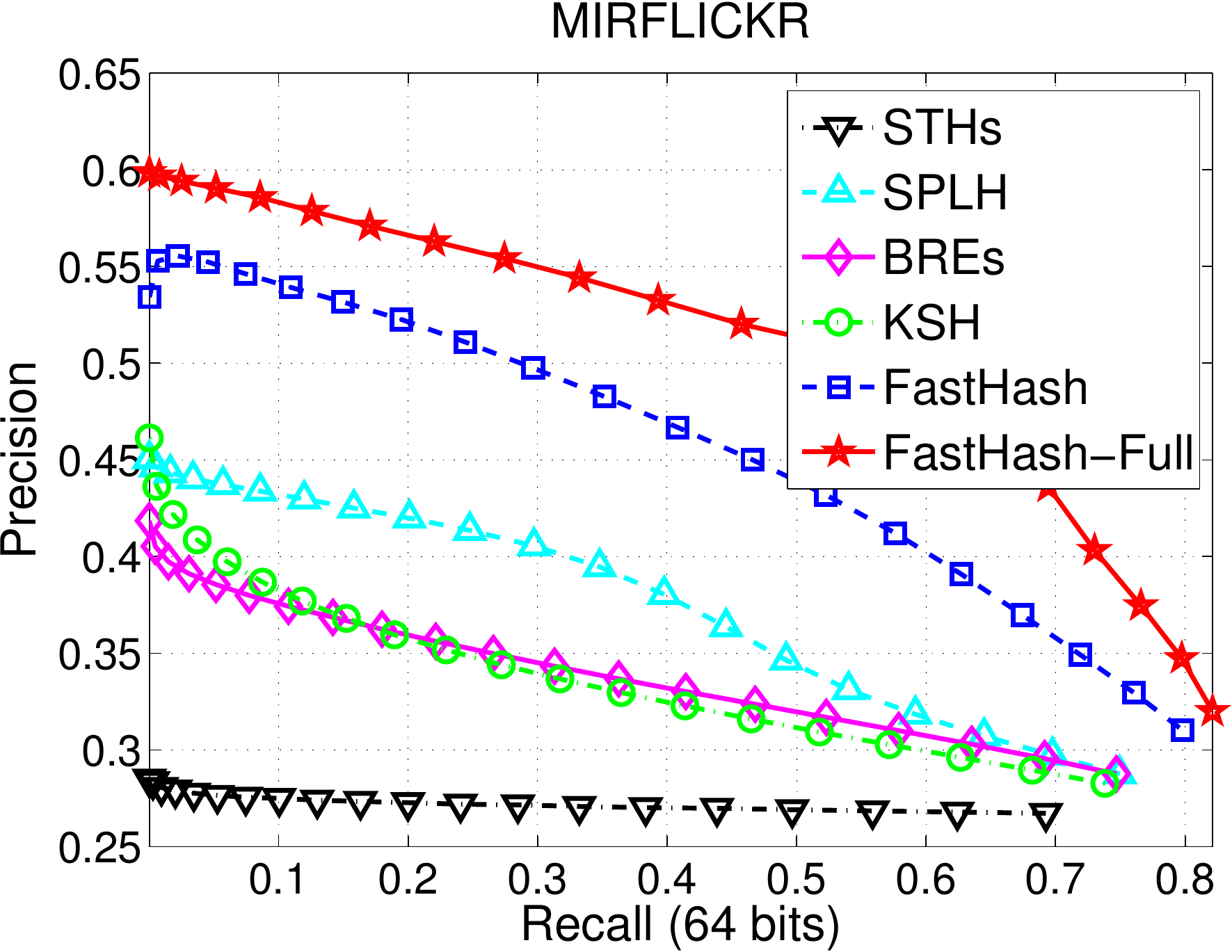}

   \includegraphics[width=.245\linewidth]{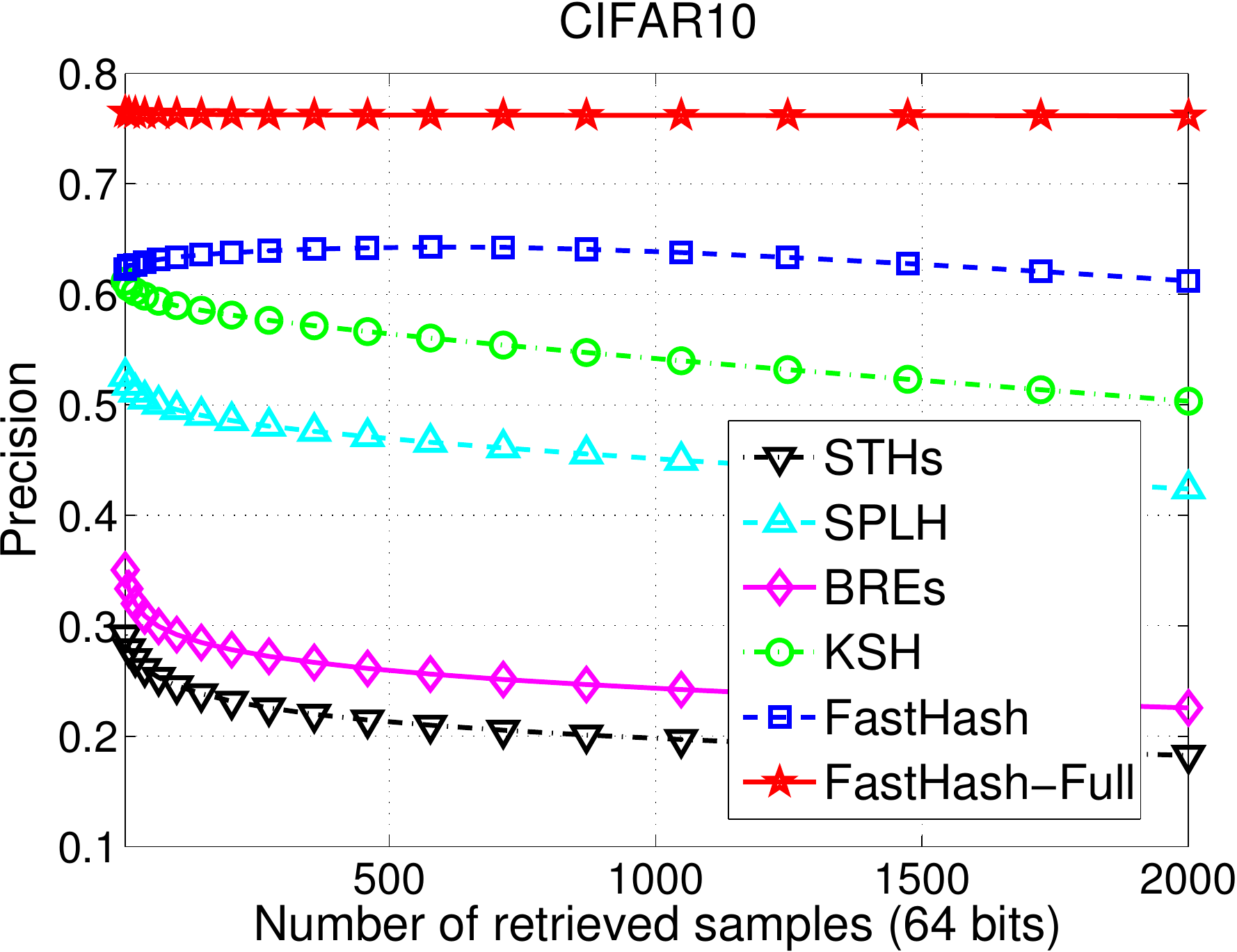}
   \includegraphics[width=.245\linewidth]{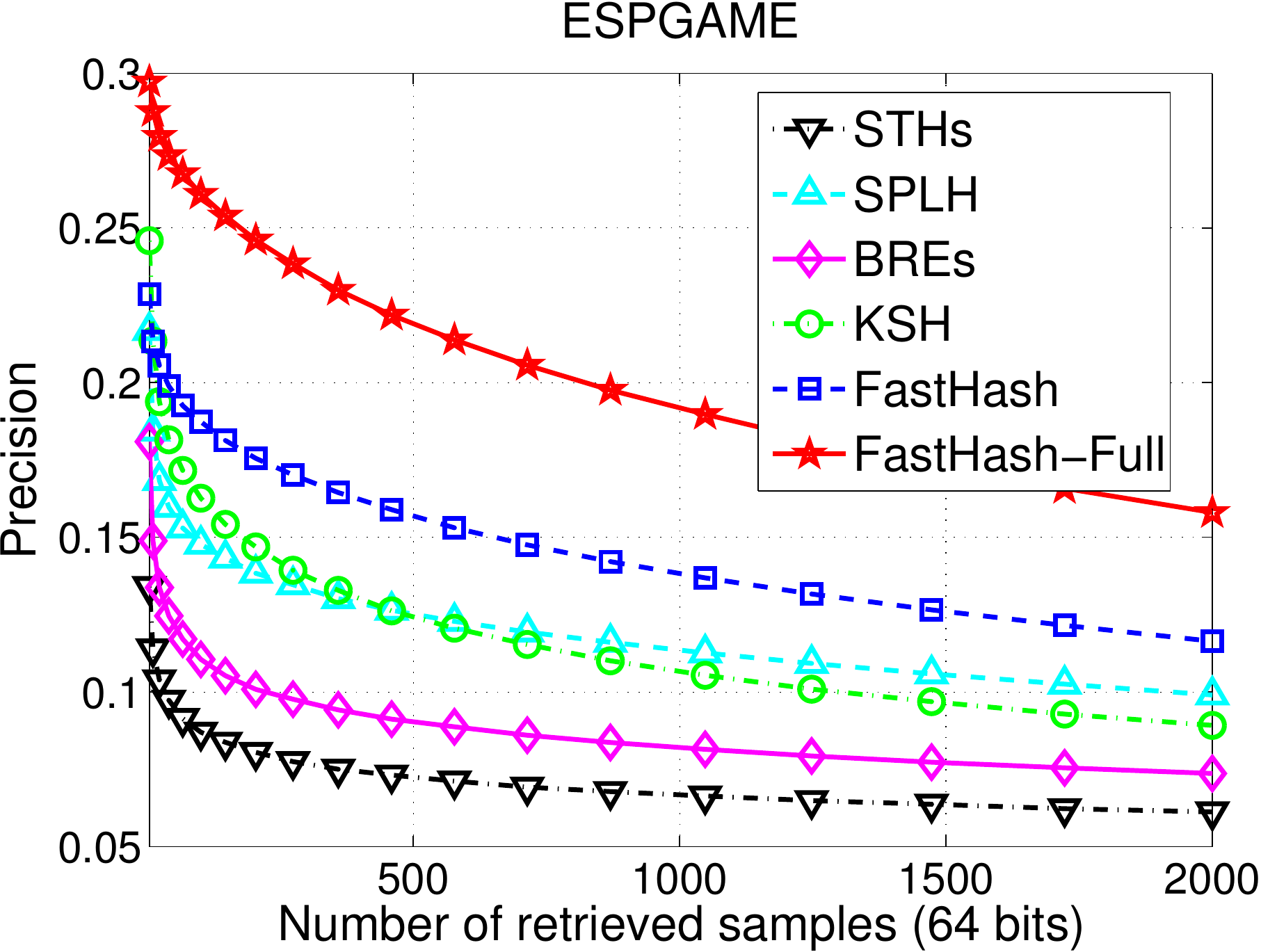}
   \includegraphics[width=.245\linewidth]{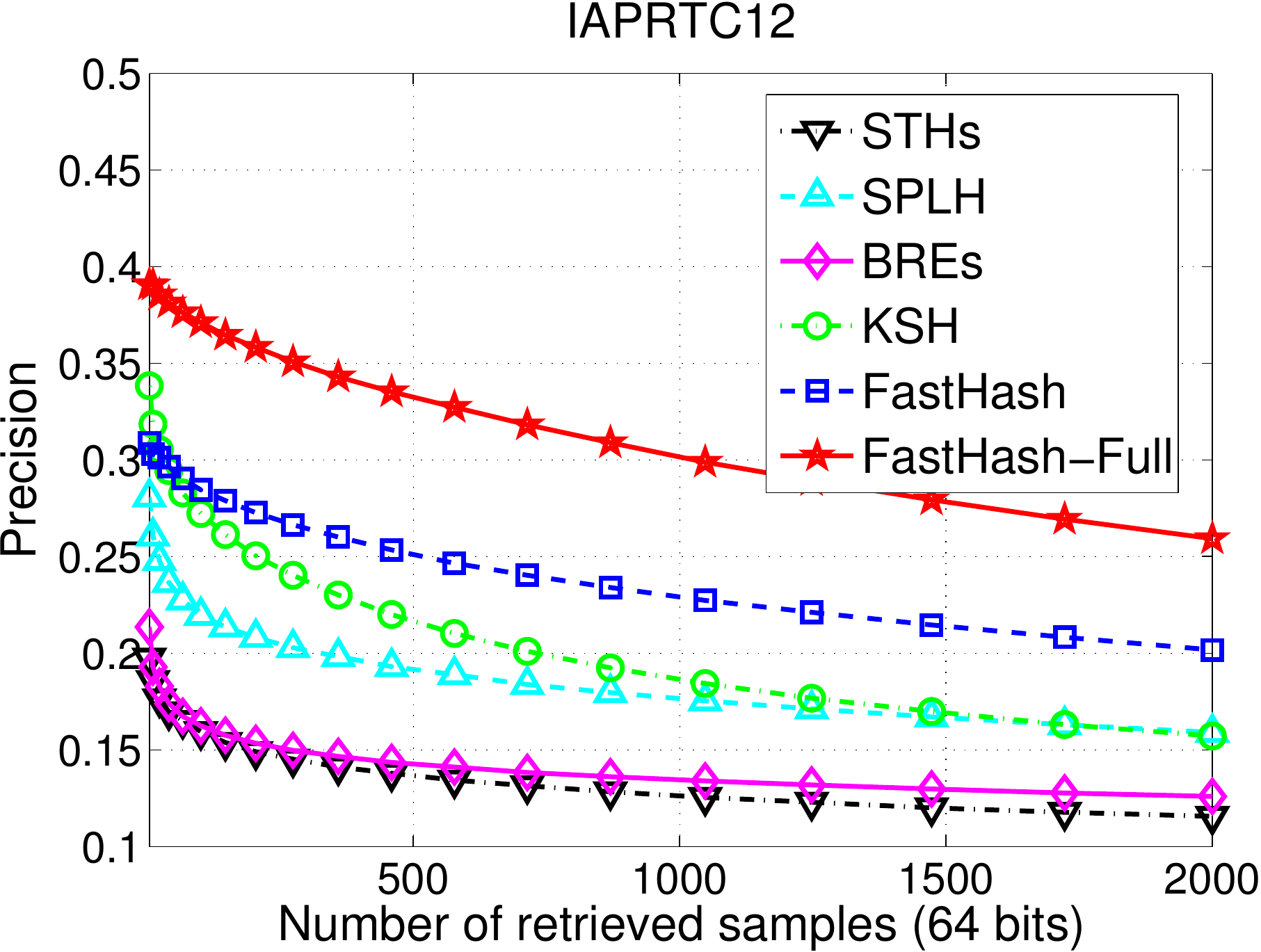}
   \includegraphics[width=.245\linewidth]{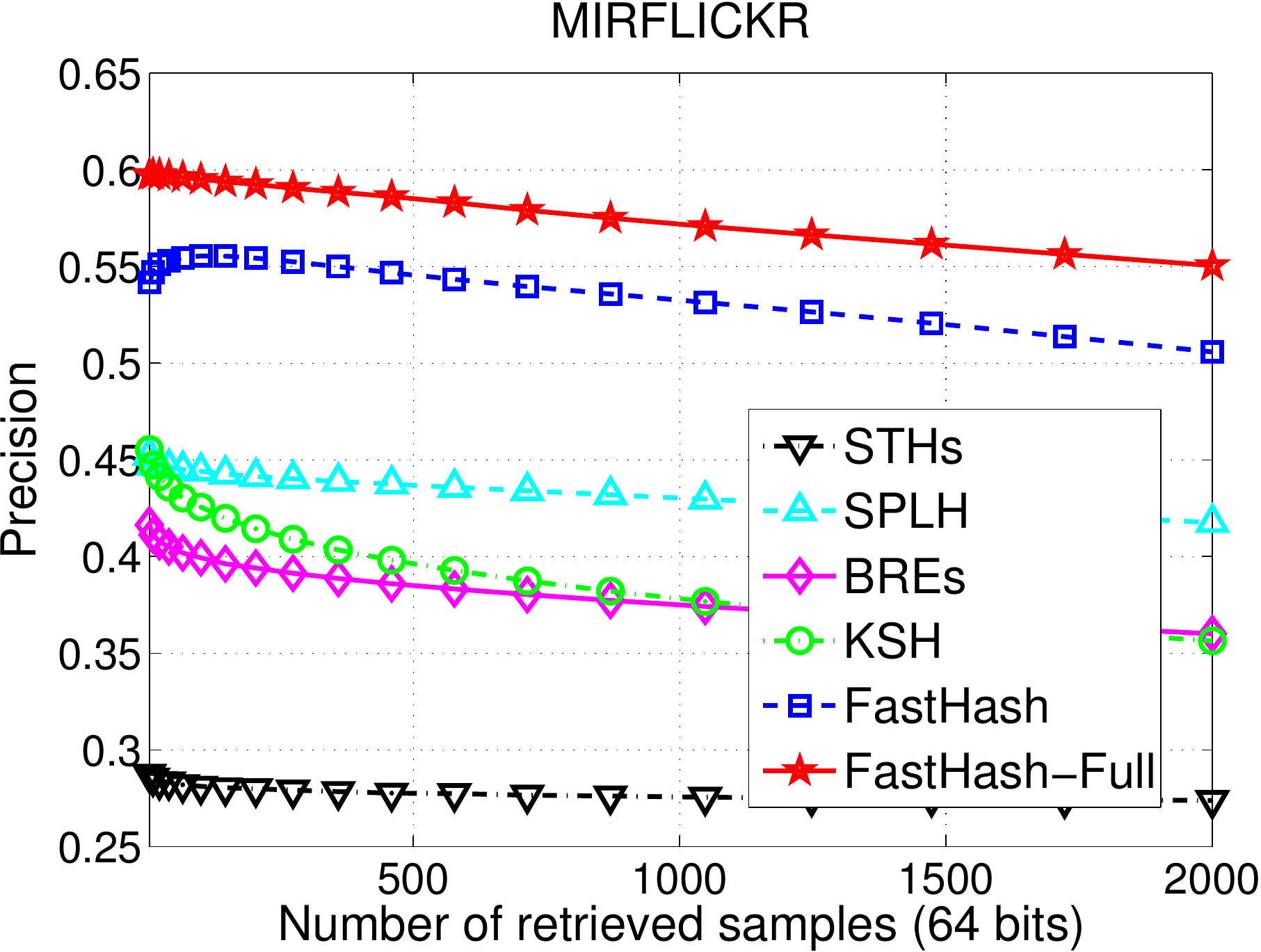}

    \caption{Results on high-dimensional codebook features.
    Our \fasth significantly outperform others.
    }
    \label{fh-fig:features}
\end{figure*}

\subsection{Comparison with KSH}
KSH \cite{KSH} has been shown to outperform many state-of-the-art comparators.
Here we evaluate our method using the KSH loss and compare against the original KSH method
on  high-dimensional codebook features.
KSH employs a simple kernel technique by predefining a set of support vectors then learning linear weights for each hash function.
For our method, we use boosted decision trees as hash functions.
KSH is trained on a sampled set of $5000$ examples, and
the number of support vectors for KSH is varied from $300$ to $3000$.
The results are summarized in Table \ref{fh-tab:ksh},
which shows that increasing the number of support vectors consistently improves the retrieval precision of KSH.
However, even on this small training set, including more support vectors will dramatically increase the training time and binary encoding time of KSH.
We have run our \fasth both on the same sampled training set and the whole training set (labeled as FastHash-Full)
in order to show
that our method can be efficiently trained on the whole dataset.
Our
\fasth and FastHash-Full outperform KSH by a large margin in terms of both training speed and retrieval precision.
It also shows that the decision tree hash functions in \fasth are much more efficient for testing (binary encoding) than the kernel function in KSH.
\fasth is orders of magnitude faster than KSH in training,
and thus much better suited to large training sets and high-dimensional data.
We also show
the precision curves of top-K retrieved examples
in Figure \ref{fh-fig:ksh}.
The number after ``KSH'' is the number of support vectors.
Besides high-dimensional features, we also compare with KSH on the low-dimensional GIST feature,
and \fasth also significantly performs better; see Table \ref{fh-tab:features} for details.
Some retrieval examples of our method are shown in Figure \ref{fh-fig:example_cifar}.

\subsection{Evaluation on different features}
We evaluate our method both on the low-dimensional ($320$ or $512$) GIST features and the
high-dimensional ($11200$) codebook features.
Several state-of-the-art supervised methods are included in this comparison:
KSH \cite{KSH}, Supervised Self-Taught Hashing (STHs) \cite{zhang2010self}, and
Semi-supervised Hashing (SPLH) \cite{wang2010semi}.
Comparison methods are trained on a sampled training set ($5000$ examples).
Results are presented in Table \ref{fh-tab:features}.
The codebook features consistently show better results than the GIST features.
The results for codebook features are also plotted in Figure \ref{fh-fig:features}.
It shows that the competing methods can be efficiently trained on the GIST features. However, when applied to high dimensional features, even on a small training set ($5000$), their training time dramatically increase.
It is very difficult to train these methods on the whole training set.
The training time of KSH mainly depends on the number of support vectors ($3000$ is used here).
We run our \fasth on the same sampled training set ($5000$ examples) and the whole training set (labeled as FastHash-Full).
Results show that \fasth can be efficiently trained on the whole dataset.
\fasth significantly outperform others both in GIST and codebook features.
The training of \fasth is also orders of magnitudes faster than others on the high-dimensional codebook features.

\begin{table}[t]
\caption{Results of methods with dimension reduction.
Our \fasth significantly outperforms others.}
\centering
 \resizebox{.95\linewidth}{!}
  {
  \begin{tabular}{ l l | l l c c}
  \hline\hline
Method  &\# Train &Train time &Test time  &Precision  &MAP\\
\hline
\multicolumn{6}{  c }{CIFAR10} \\ \hdashline
PCA+KSH&50000&$-$ &$-$  &$-$  &$-$\\
PCA+SPLH  &50000  &25984  &18 &0.482  &0.388\\
PCA+STHs  &50000  &7980 &18 &0.287  &0.200\\
CCA+ITQ&50000&1055  &7  &0.676  &0.642\\
\best FastH &50000  &1794&21  &\best 0.763  &\best 0.775  \\
\hline
\multicolumn{6}{  c }{IAPRTC12} \\ \hdashline
PCA+KSH &17665  &55031  &11 &0.082  &0.103\\
PCA+SPLH  &17665  &1855 &7  &0.239  &0.169\\
PCA+STHs  &17665  &2463 &7  &0.174  &0.126\\
CCA+ITQ&17665&804 &3  &0.332  &0.198\\
\best FastH &17665  &620  &9  &\best 0.371  &\best 0.276\\
\hline
\multicolumn{6}{  c }{ESPGAME} \\ \hdashline
PCA+KSH &18689  &55714  &11 &0.141  &0.084\\
PCA+SPLH  &18689  &2409 &7  &0.153  &0.103\\
PCA+STHs  &18689  &2777 &7  &0.098  &0.069\\
CCA+ITQ&18689&814 &3  &0.216  &0.131\\
\best FastH &18689  &663  &9  &\best 0.261  &\best 0.189\\
\hline
\multicolumn{6}{  c }{MIRFLICKR} \\ \hdashline
PCA+KSH &12500  &54260  &8  &0.384  &0.313\\
PCA+SPLH  &12500  &1054 &5  &0.445  &0.391\\
PCA+STHs  &12500  &1768 &5  &0.347  &0.301\\
CCA+ITQ &12500&699  &3  &0.519  &0.408\\
\best FastH &12500  &509  &7  &\best 0.595  &\best 0.558\\

  \hline \hline
  \end{tabular}
  }
\label{fh-tab:pca}
\end{table}

\subsection{Comparison with dimension reduction}
One possible way to reduce the training cost on high-dimensional data is to apply dimension reduction.
For the methods: KSH, SPLH and STHs, we thus reduce the original 11200-dimensional codebook features to 500 dimensions by applying PCA.
We also compare to CCA+ITQ \cite{gong2012iterative} which combines ITQ with supervised dimensional reduction.
Our \fasth still uses the original high-dimensional features.
The result is summarized in Table \ref{fh-tab:pca}.
After dimension reduction, most comparison methods can be trained on the whole training set within 24 hours
(except KSH on CIFAR10).
However it still much slower than our \fasth.
Our \fasth also performs significantly better on retrieval precision.
Learning decision tree hash functions in \fasth actually perform feature selection and hash function learning at the same time, which shows much better performance than other hashing methods with dimensional reduction.

\begin{table}[t]
\caption{Performance of our \fasth on more features ($22400$ dimensions) and more bits ($1024$ bits).
The training and binary coding time (test time) of \fasth is only linearly increased with the bit length.}
\centering
 \resizebox{.95\linewidth}{!}
  {
  \begin{tabular}{ l l l | l l c c}
    \hline \hline
Bits   &\#Train  &Features &Train time  &Test time  &Precision  &MAP  \\
\hline
\multicolumn{7}{ c}{CIFAR10}\\ \hdashline
64  &50000  &11200  &1794 &21 &0.763  &0.775  \\
256 &50000  &22400  &5588 &71 &0.794  &0.814  \\
1024  &50000  &22400  &22687  &282  &0.803  &0.826  \\
\hline
\multicolumn{7}{ c}{IAPRTC12} \\ \hdashline
64  &17665  &11200  &320  &9  &0.371  &0.276   \\
256 &17665  &22400  &1987 &33 &0.439  &0.314    \\
1024  &17665  &22400  &7432 &134  &0.483  &0.338   \\
\hline
\multicolumn{7}{ c}{ESPGAME} \\ \hdashline
64  &18689  &11200  &663  &9  &0.261  &0.189\\
256 &18689  &22400  &1912 &34 &0.329  &0.233\\
1024  &18689  &22400  &7689 &139  &0.373  &0.257\\
\hline
\multicolumn{7}{ c}{MIRFLICKR} \\ \hdashline
64  &12500  &11200  &509  &7  &0.595  &0.558\\
256 &12500  &22400  &1560 &28 &0.612  &0.567\\
1024  &12500  &22400  &6418 &105  &0.628  &0.576\\
\hline\hline
  \end{tabular}
  }
\label{fh-tab:long_bits}
\end{table}

\subsection{More features and more bits}
We increase the codebook size to $1600$ for generating higher dimensional features ($22400$ dimensions) and run up to $1024$ bits.
Table \ref{fh-tab:long_bits}
shows that \fasth can be efficiently trained on high-dimensional features with large bit length. The training and binary coding time (test time) of \fasth increases only linearly  with bit length. The retrieval result is improved when the bit length is increased.

\begin{table}[t]
\caption{Comparison of spectral method and the proposed Block GraphCut (Block-GC)
for binary code inference.
Block-GC achieves lower objective value and takes less inference time,
  thus performs much better.
}
\centering
 \resizebox{.95\linewidth}{!}
  {
  \begin{tabular}{ l | l c | c c}
  \hline\hline
Step-1 methods  &\#train &Block Size  & Time (s)  &Objective\\
\hline
\multicolumn{5}{  c }{SUN397} \\ \hdashline
Spectral  &100417 &N/A  &5281 &0.7524\\
Block-GC-1  &100417 &1  &\best 298  &0.6341\\
\best Block-GC &100417 &253  &2239 &\best 0.5608\\
\hline
\multicolumn{5}{  c }{CIFAR10} \\ \hdashline
Spectral &50000  &N/A  &1363 &0.4912\\
Block-GC-1 &50000  &1  &\best 158  &0.5338\\
\best Block-GC &50000  &5000 &788  &\best 0.4158\\
\hline
\multicolumn{5}{  c }{IAPRTC12} \\ \hdashline
Spectral  &17665  &N/A  &426  &0.7237\\
Block-GC-1   &17665  &1  &\best 43 &0.7316\\
\best Block-GC  &17665  &316  &70 &\best 0.7095\\
\hline
\multicolumn{5}{  c }{ESPGAME} \\ \hdashline
Spectral &18689  &N/A  &480  &0.7373\\
Block-GC-1 &18689  &1  &\best 45 &0.7527\\
\best Block-GC &18689  &336  &72 &\best 0.7231\\
\hline
\multicolumn{5}{  c }{MIRFLICKR} \\ \hdashline
Spectral &12500  &N/A  &125  &0.5718\\
Block-GC-1 &12500  &1  &\best 28 &0.5851\\
\best Block-GC &12500  &295  &40 &\best 0.5449\\
  \hline \hline
  \end{tabular}
  }
\label{fh-tab:tsh_step1}
\end{table}

\begin{table}[t]
\caption{Comparison of combinations of hash functions and binary inference methods.
Decision tree hash functions perform much better than linear SVM (LSVM) hash functions.
The proposed Block GraphCut (Block-GC)
performs much better than the spectral method.
}
\centering
\resizebox{0.95\linewidth}{!}
  {
  \begin{tabular}{ l l | c c c}
    \hline \hline
Step-1 method   &Step-2 method &Precision  &MAP  &Prec-Recall  \\
\hline
\multicolumn{5}{ c}{CIFAR10} \\ \hdashline
\best Block-GC	&TREE	&\best 0.763	&\best 0.775	&\best 0.605\\
Spectral	&TREE	&0.731	&0.695	&0.501\\
Block-GC	&LSVM	&0.669	&0.621	&0.435\\
Spectral	&LSVM	&0.624	&0.512	&0.322\\
\hline
\multicolumn{5}{ c}{IAPRTC12} \\ \hdashline
\best Block-GC	&TREE	&\best 0.371	&\best 0.276	&\best 0.210\\
Spectral	&TREE	&0.355	&0.265	&0.201\\
Block-GC	&LSVM	&0.327	&0.238	&0.186\\
Spectral	&LSVM	&0.275	&0.207	&0.160\\
\hline
\multicolumn{5}{ c}{ESPGAME} \\ \hdashline
\best Block-GC	&TREE	&\best 0.261	&\best 0.189	&\best 0.126\\
Spectral	&TREE	&0.249	&0.183	&0.123\\
Block-GC	&LSVM	&0.227	&0.157	&0.109\\
Spectral	&LSVM	&0.183	&0.133	&0.093\\
\hline
\multicolumn{5}{ c}{MIRFLICKR} \\ \hdashline
\best Block-GC	&TREE	&\best 0.595	&\best 0.558	&\best 0.420\\
Spectral	&TREE	&0.584	&0.551	&0.413\\
Block-GC	&LSVM	&0.536	&0.498	&0.344\\
Spectral	&LSVM	&0.489	&0.466	&0.319\\
  \hline \hline
  \end{tabular}
  }
\label{fh-tab:tsh_step2}
\end{table}

\begin{figure*}[t]
    \centering

  \includegraphics[width=.245\linewidth]{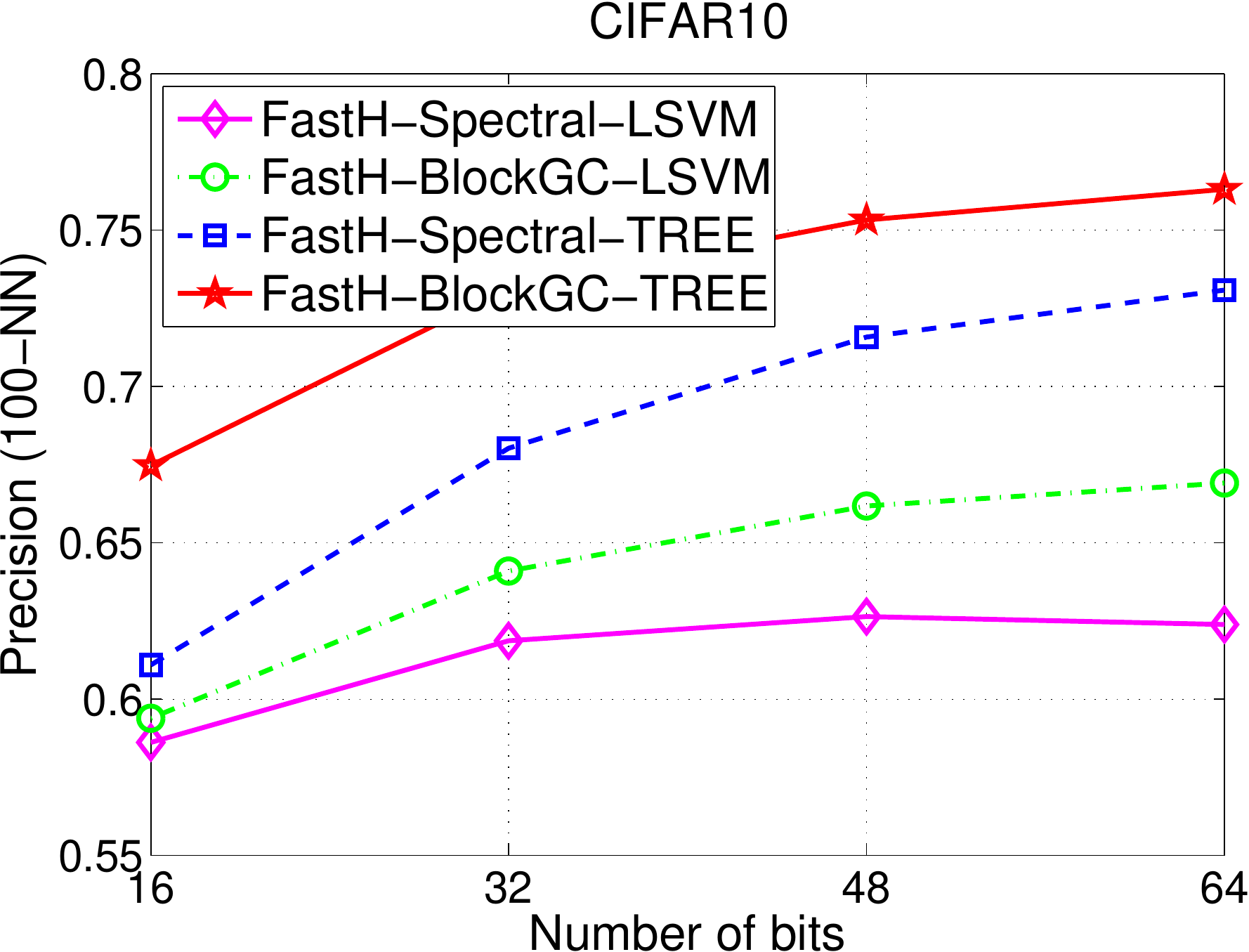}
   \includegraphics[width=.245\linewidth]{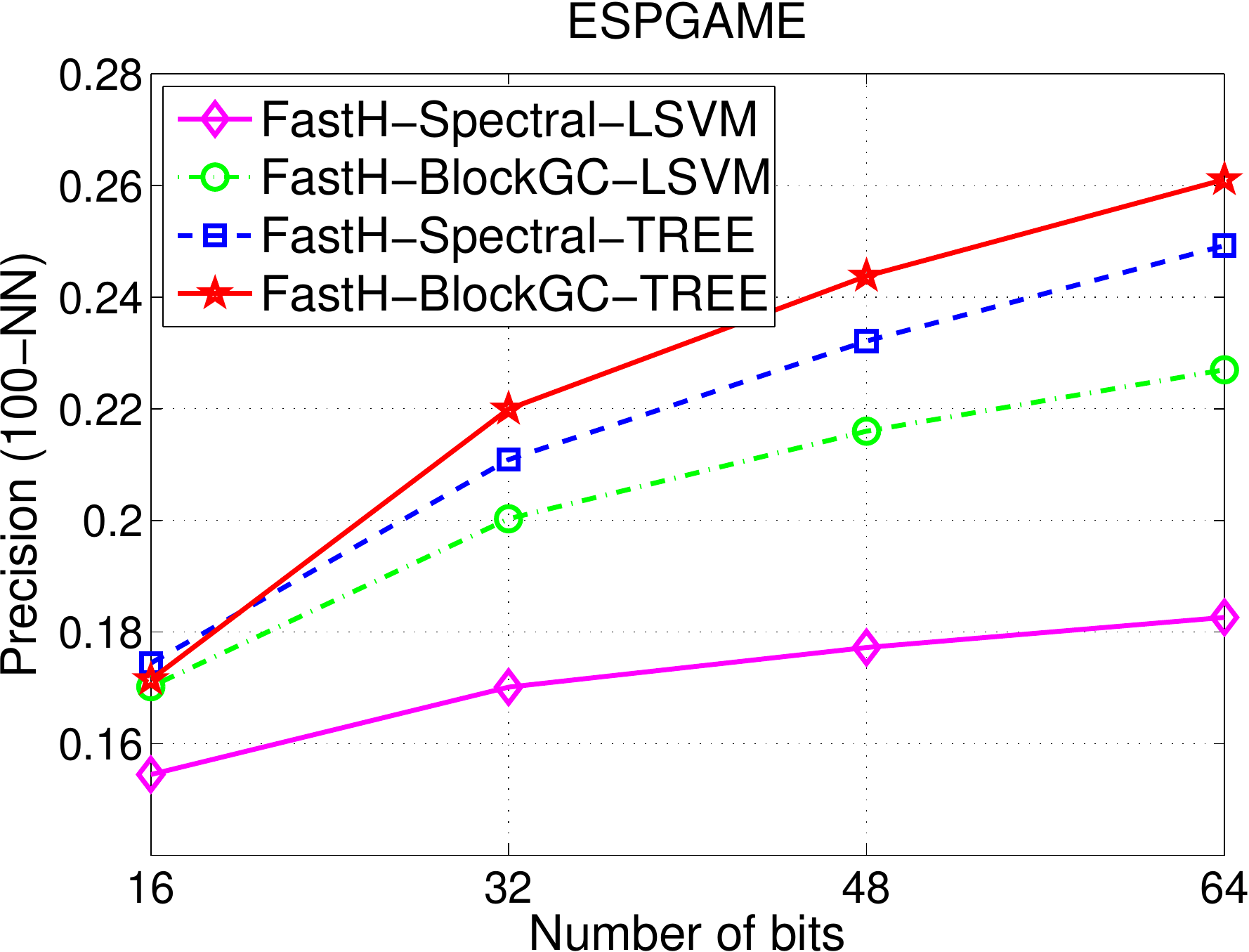}
   \includegraphics[width=.245\linewidth]{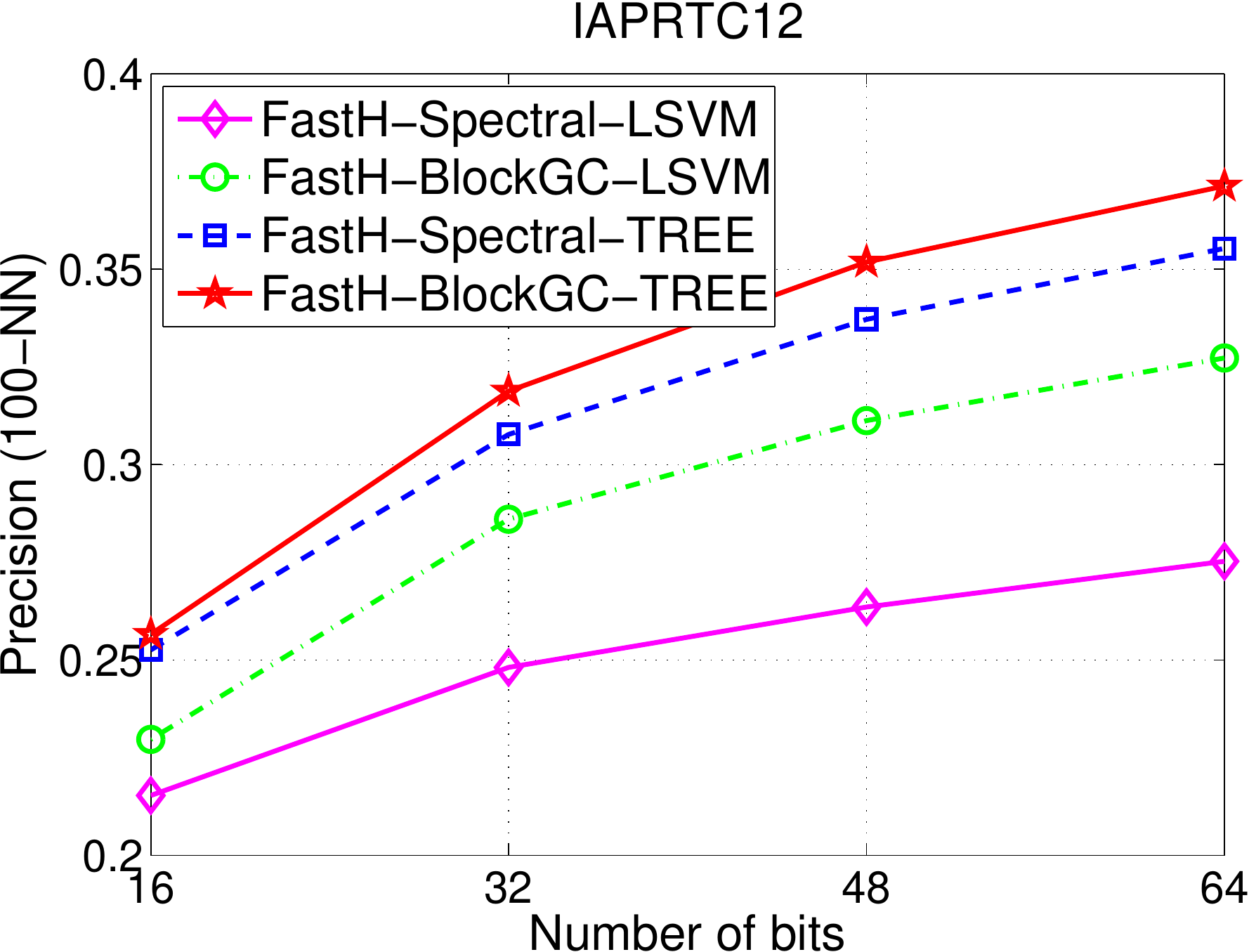}
   \includegraphics[width=.245\linewidth]{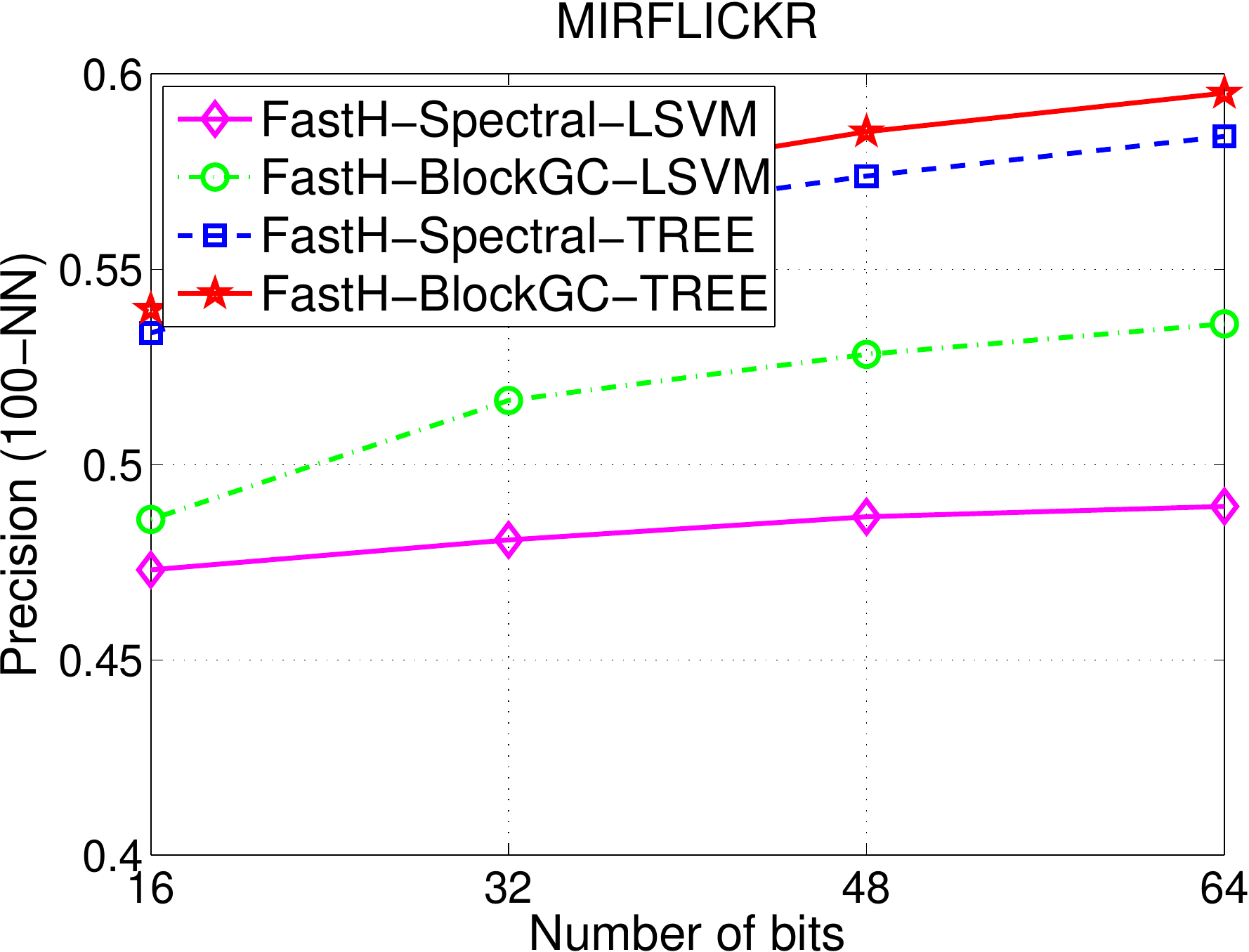}

    \caption{Comparison of combinations of hash functions and binary inference methods.
    Decision tree hash functions perform much better than linear SVM.
The proposed Block-GC performs much better than the spectral method.
}
    \label{fh-fig:tsh}
\end{figure*}

\subsection{Binary code inference evaluation}

Here we evaluate different algorithms for solving the binary code inference problem which is involved in Step-1 of our learning process.
We compare the proposed Block GraphCut (Block-GC) with the simple spectral method.
The number of iterations of Block-GC is set to $2$, which is the same as that in other experiments.
Results are summarized in Table \ref{fh-tab:tsh_step1}.
We construct blocks using Algorithm \ref{fh-alg:block}.
The averaged block size is reported in the table.
We also evaluate a special case where the block size is set to 1 for Block-CG (labeled as Block-CG-1),
in which case Block-GC is reduced to the ICM (\cite{besag1986statistical, UGM}) method.
It shows that when the training set gets larger, the spectral method becomes slow.
The objective value shown in the table is divided by the number of defined pairwise relations.
Results show that the proposed Block-GC achieves much lower objective values and takes less inference time, and hence outperforms the spectral method.
The inference time for Block-CG increases only linearly with training set size.

Our method is able to incorporate different kinds of hash functions in Step-2.
Here we provide results comparing different combinations of
hash functions (Step-2) and binary code inference methods (Step-1).
We evaluate linear SVM and decision tree hash functions with
the spectral method and the proposed Block-GC.
Codebook features are used here.
Results are summarized in Table \ref{fh-tab:tsh_step2}.
We also plot the retrieval performance in Figure~\ref{fh-fig:tsh}.
As expected, decision tree hash functions perform much better than linear SVM hash functions,
and the proposed Block-GC performs much better than the spectral method,
which indicates that Block-GC is able to generate high quality binary codes.

\begin{figure*}
    \centering

   \includegraphics[width=.245\linewidth]{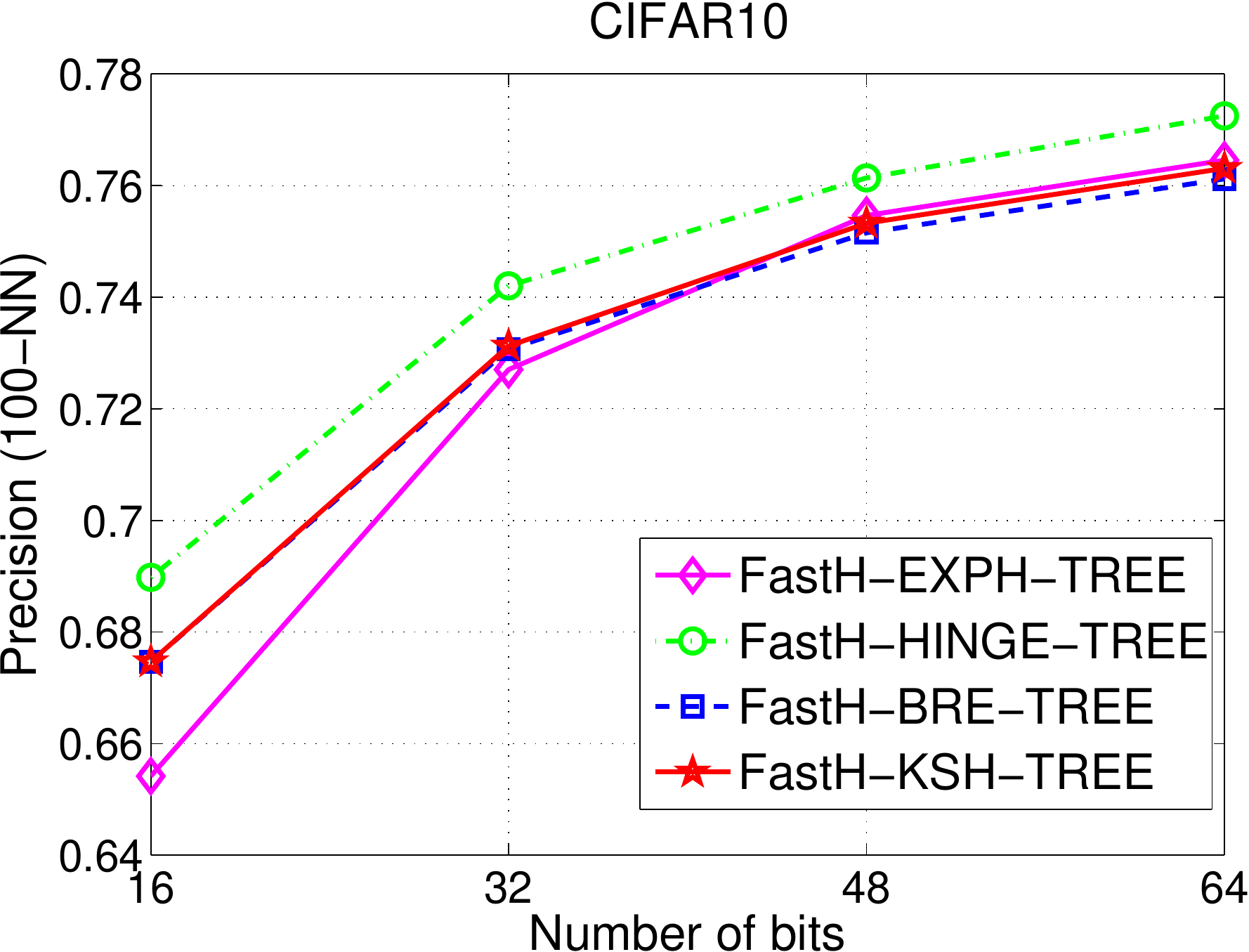}
   \includegraphics[width=.245\linewidth]{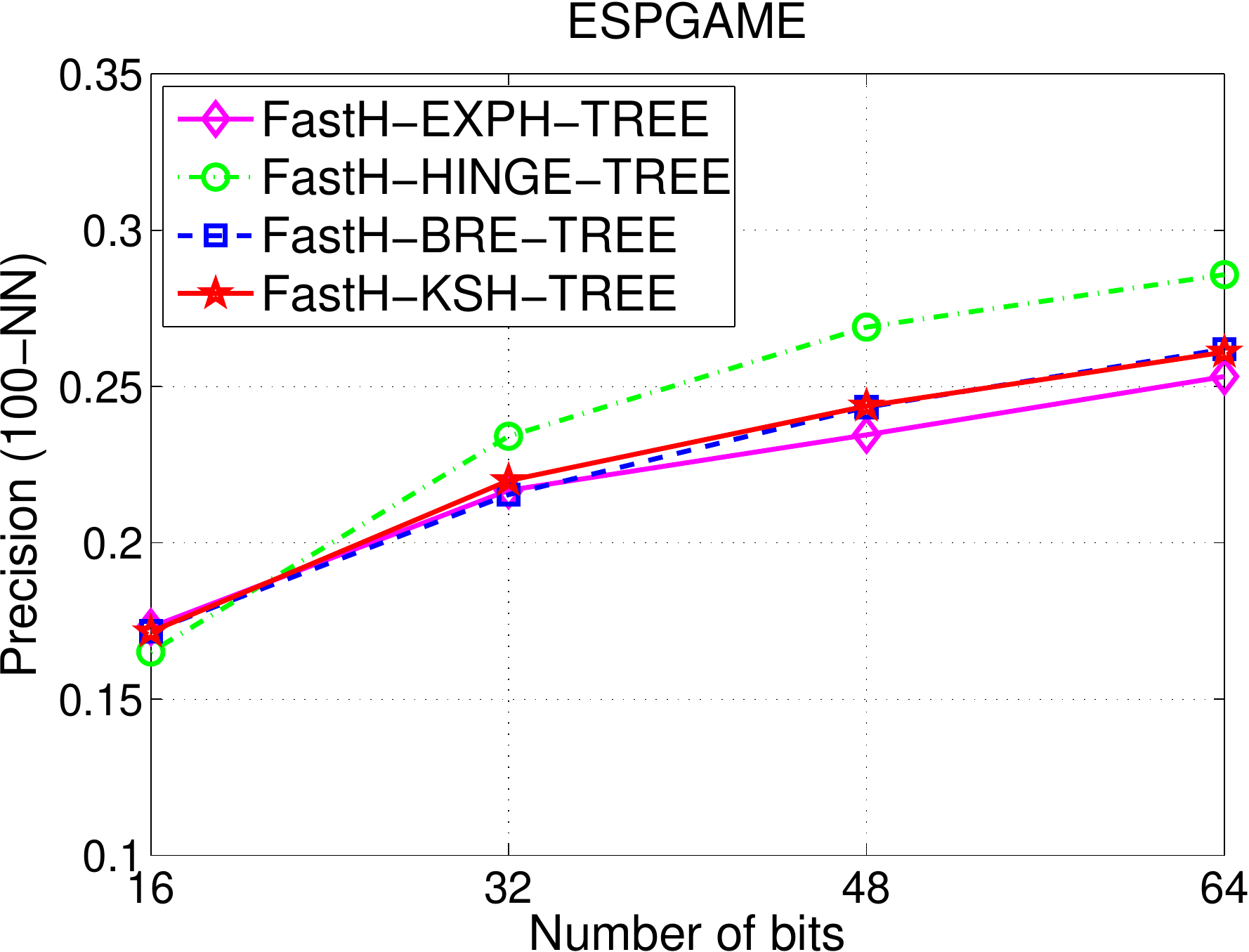}
   \includegraphics[width=.245\linewidth]{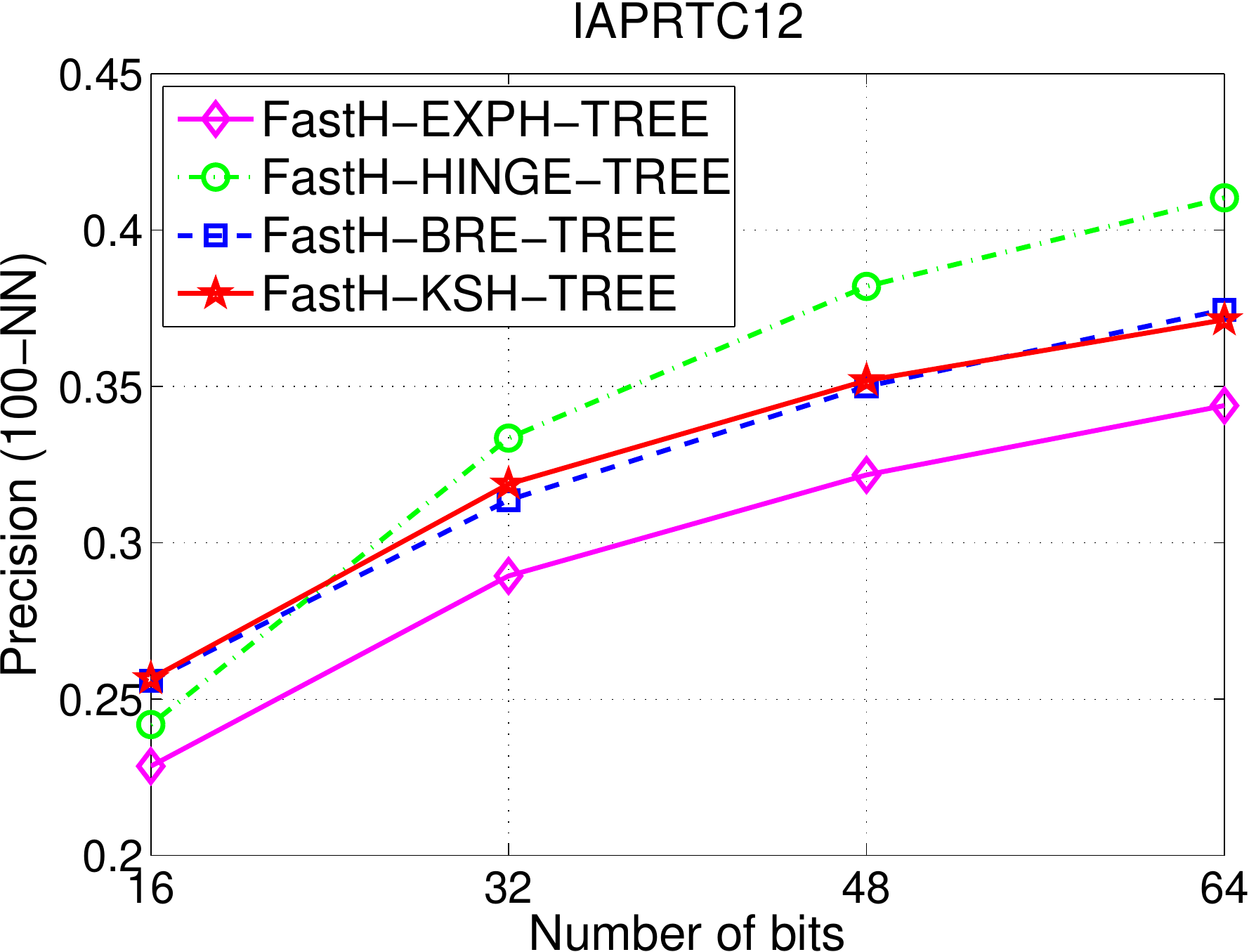}
   \includegraphics[width=.245\linewidth]{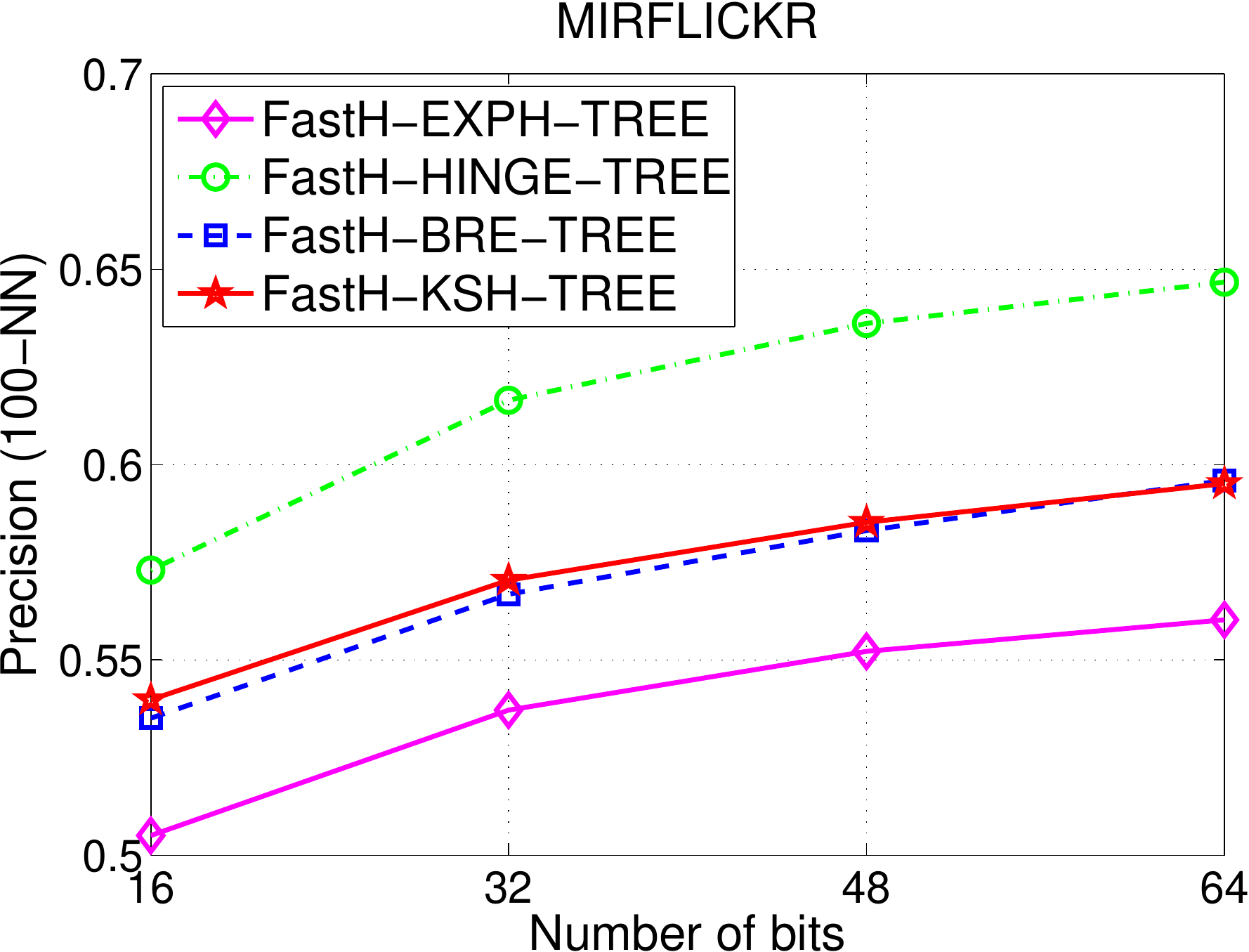}

    \caption{
    Comparison of using different loss functions with decision tree hash functions.
Using the Hinge loss (FastH-Hinge) achieves the best result.
}
    \label{fh-fig:loss}
\end{figure*}

\subsection{Using different loss functions}

Our method is able to incorporate different kinds of loss functions and hash functions.
Here we compare the performance of 4 kinds of loss function: KSH \eqref{eq:loss-ksh}, Hinge \eqref{eq:loss-hinge}, BRE \eqref{eq:loss-bre} and ExpH \eqref{eq:loss-exph}, combined with linear SVM \eqref{eq:hashfun_lsvm} and boosted decision tree \eqref{eq:hashfun_btree} hash functions.
Results are summarized in Table \ref{fh-tab:loss}.
It shows that the Hinge loss usually achieves the best performance,
and the remaining loss functions have similar performances.
The Hinge loss function in \eqref{eq:loss-hinge}
encourages the hamming distance of dissimilar pairs to be at least half of the bit length,
instead of unnecessarily pushing it to the maximum value.
It also shows that decision tree hash functions perform much better than linear SVM.
 We plot the performance of decision tree hash functions combined with different kinds of loss functions
 in Figure \ref{fh-fig:loss}.

\begin{table}[t]
\caption{Comparison of combinations of different loss functions and hash functions.
Using the Hinge loss achieves the best result. Decision tree hash functions perform much better than linear SVM hash functions.
}
\centering
\resizebox{0.9\linewidth}{!}
  {
  \begin{tabular}{ l l | c c c}
    \hline \hline
Loss   &Step-2 method &Precision  &MAP  &Prec-Recall  \\
\hline
\multicolumn{5}{ c}{CIFAR10} \\ \hdashline
FastH-KSH	&TREE	&0.763	&0.775	&0.605\\
FastH-BRE	&TREE	&0.761	&0.772	&0.602\\
\best FastH-HINGE	&TREE	&\best 0.773	&\best 0.780	&\best 0.613\\
FastH-EXPH	&TREE	&0.765	&0.774	&0.604\\
FastH-KSH	&LSVM	&0.669	&0.621	&0.435\\
FastH-BRE	&LSVM	&0.667	&0.619	&0.431\\
FastH-HINGE	&LSVM	&0.669	&0.604	&0.387\\
FastH-EXPH	&LSVM	&0.665	&0.619	&0.430\\
\hline
\multicolumn{5}{ c}{IAPRTC12} \\ \hdashline
FastH-KSH	&TREE	&0.371	&0.276	&0.210\\
FastH-BRE	&TREE	&0.375	&0.279	&0.213\\
\best FastH-HINGE	&TREE	&\best 0.410	&\best 0.295	&\best 0.234\\
FastH-EXPH	&TREE	&0.344	&0.268	&0.199\\
FastH-KSH	&LSVM	&0.327	&0.238	&0.186\\
FastH-BRE	&LSVM	&0.328	&0.237	&0.187\\
FastH-HINGE	&LSVM	&0.338	&0.247	&0.194\\
FastH-EXPH	&LSVM	&0.295	&0.225	&0.170\\
\hline
\multicolumn{5}{ c}{ESPGAME} \\ \hdashline
FastH-KSH	&TREE	&0.261	&0.189	&0.126\\
FastH-BRE	&TREE	&0.262	&0.189	&0.125\\
\best FastH-HINGE	&TREE	&\best 0.286	&\best 0.200	&\best 0.148\\
FastH-EXPH	&TREE	&0.253	&0.194	&0.124\\
FastH-KSH	&LSVM	&0.227	&0.157	&0.109\\
FastH-BRE	&LSVM	&0.231	&0.160	&0.111\\
FastH-HINGE	&LSVM	&0.225	&0.155	&0.109\\
FastH-EXPH	&LSVM	&0.216	&0.154	&0.104\\
\hline
\multicolumn{5}{ c}{MIRFLICKR} \\ \hdashline
FastH-KSH	&TREE	&0.595	&0.558	&0.420\\
FastH-BRE	&TREE	&0.596	&0.559	&0.420\\
\best FastH-HINGE	&TREE	&\best 0.647	&\best 0.592	&\best 0.457\\
FastH-EXPH	&TREE	&0.560	&0.543	&0.404\\
FastH-KSH	&LSVM	&0.536	&0.498	&0.344\\
FastH-BRE	&LSVM	&0.531	&0.494	&0.341\\
FastH-HINGE	&LSVM	&0.567	&0.522	&0.397\\
FastH-EXPH	&LSVM	&0.502	&0.471	&0.323\\
  \hline \hline
  \end{tabular}
  }
\label{fh-tab:loss}
\end{table}

\begin{figure*}[t]
    \centering

   \includegraphics[width=.245\linewidth]{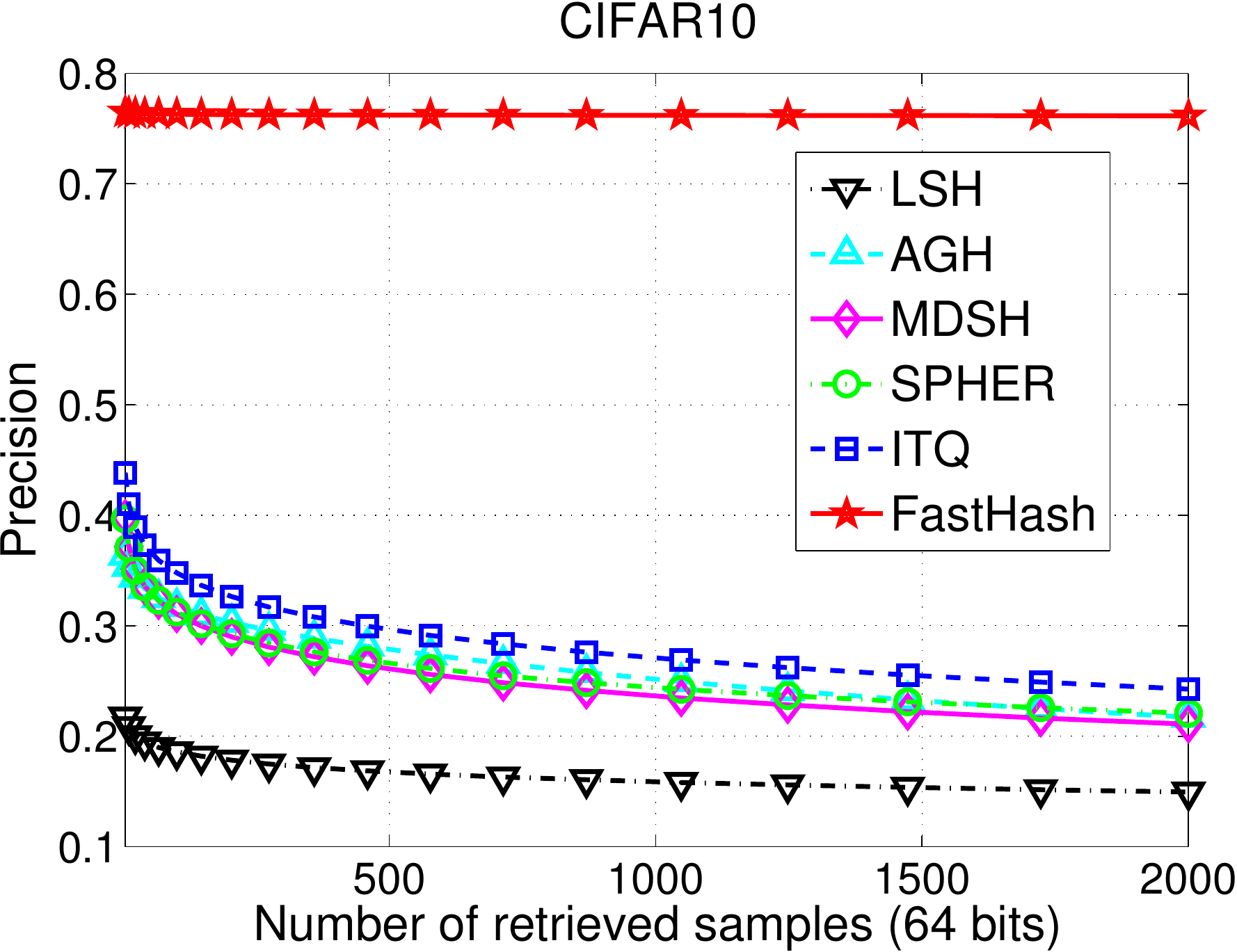}
   \includegraphics[width=.245\linewidth]{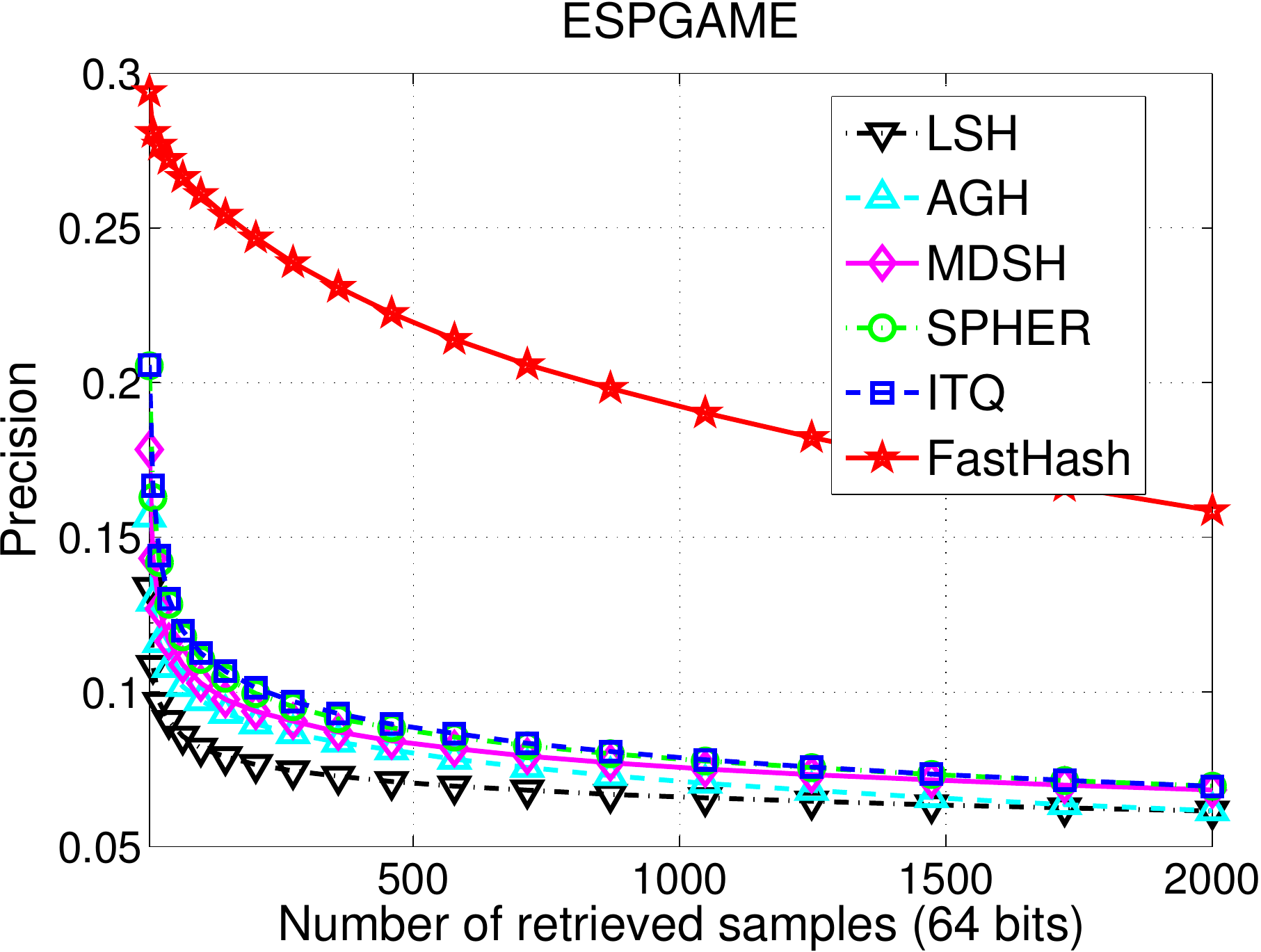}
   \includegraphics[width=.245\linewidth]{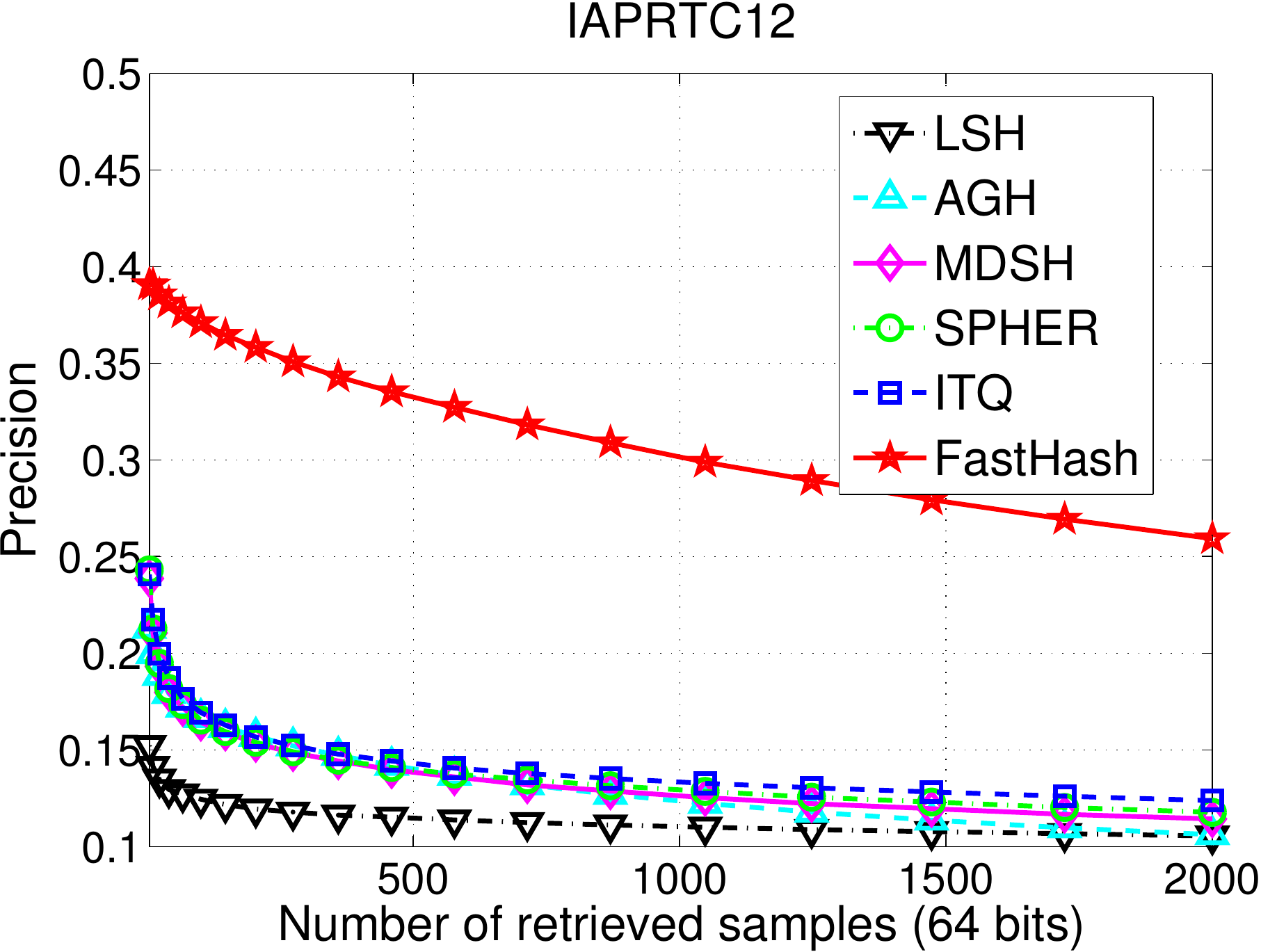}
   \includegraphics[width=.245\linewidth]{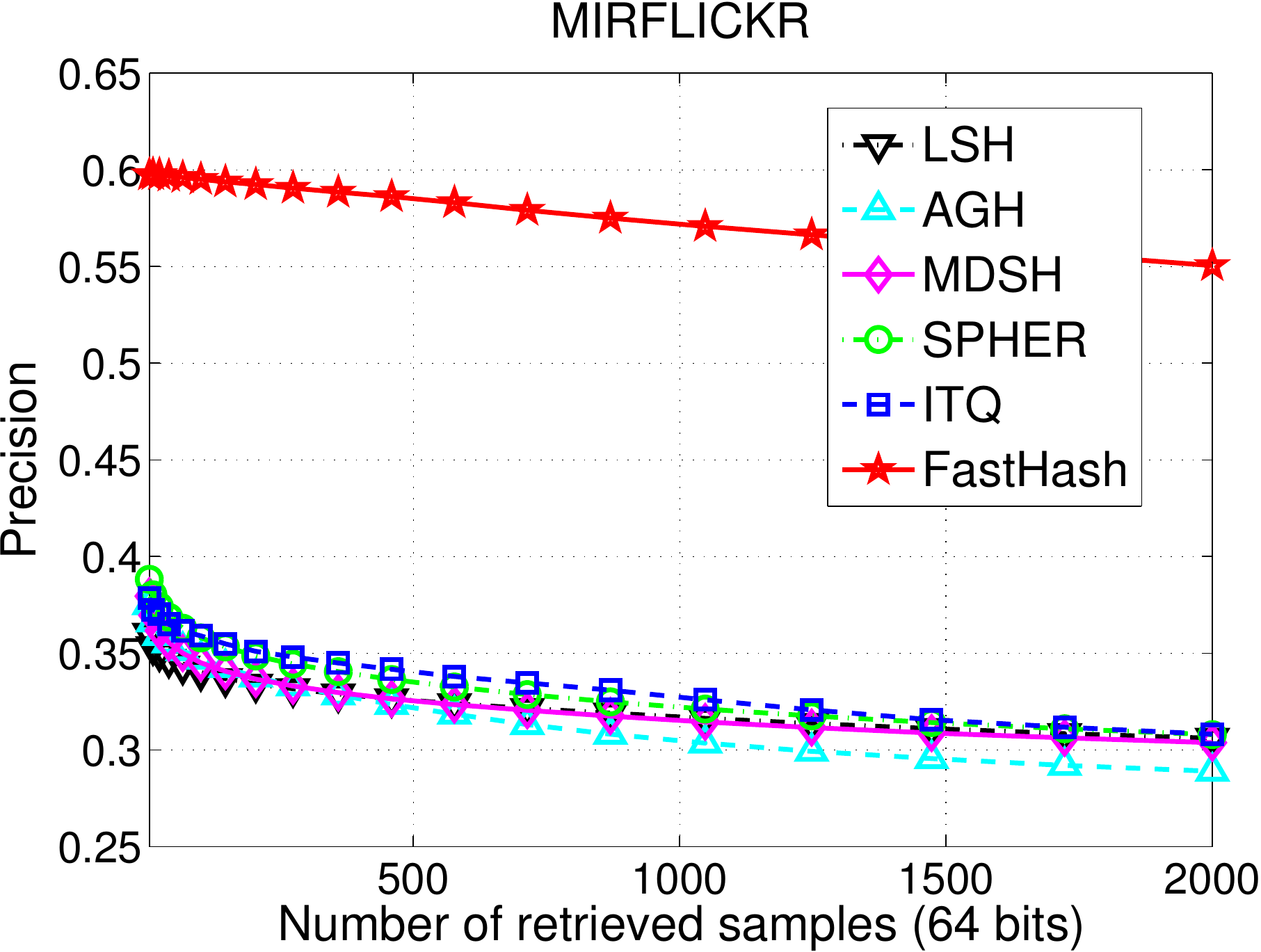}

    \caption{
    Comparison with a few unsupervised hashing methods.
    Unsupervised methods perform poorly for preserving label based similarity. Our \fasth performs significantly better.}
    \label{fh-fig:unsup}
\end{figure*}

\subsection{Comparison with unsupervised methods}
We compare to some popular unsupervised hashing methods:
LSH \cite{Gionis1999},
ITQ \cite{gong2012iterative},
Anchor Graph Hashing
(AGH) \cite{liu2011hashingGraphs}, Spherical Hashing (SPHER) \cite{jae2012},
Multi-dimension Spectral Hashing (MDSH) \cite{MDSH} \cite{MDSH}.
The retrieval performance is shown in Figure~\ref{fh-fig:unsup}.
Unsupervised methods perform poorly at preserving label based similarity. Our \fasth significantly outperforms others.
\subsection{Large dataset: SUN397}
The SUN397 \cite{xiao2010sun} dataset contains more than $100,000$ scene images.
$8000$ images are randomly selected as test queries, while the remaining $100,417$ images form the training set.
$11200$-dimensional codebook features are used here.
We compare with a number of supervised and unsupervised methods.
The depth for decision trees is set to $6$.
Results are presented in Table \ref{fh-tab:sun}
Supervised methods: KSH, BREs, SPLH and STHs are trained on a subset of $10$K examples.
Even on this sampled training set, the training of these methods are already impractically slow.
In contrast, our method can be efficiently trained with a long bit length ($1024$ bits)
on the whole training set (more than $100,000$ training examples).
Our \fasth significantly outperforms other methods.
The retrieval performance is also plotted in Figure~\ref{fh-fig:sun}.
It shows the results of those comparison methods that are able to be trained to $1024$ bits on the whole training set.
In terms of memory usage, many comparison methods require a large amount of memory for large matrix multiplication. In contrast, the decision tree learning in our method only involves simple comparison operations on quantized feature data ($256$ bins), thus \fasth consumes less than $7$GB for training.

\begin{figure}[t]
    \centering

   \includegraphics[width=.7\linewidth]{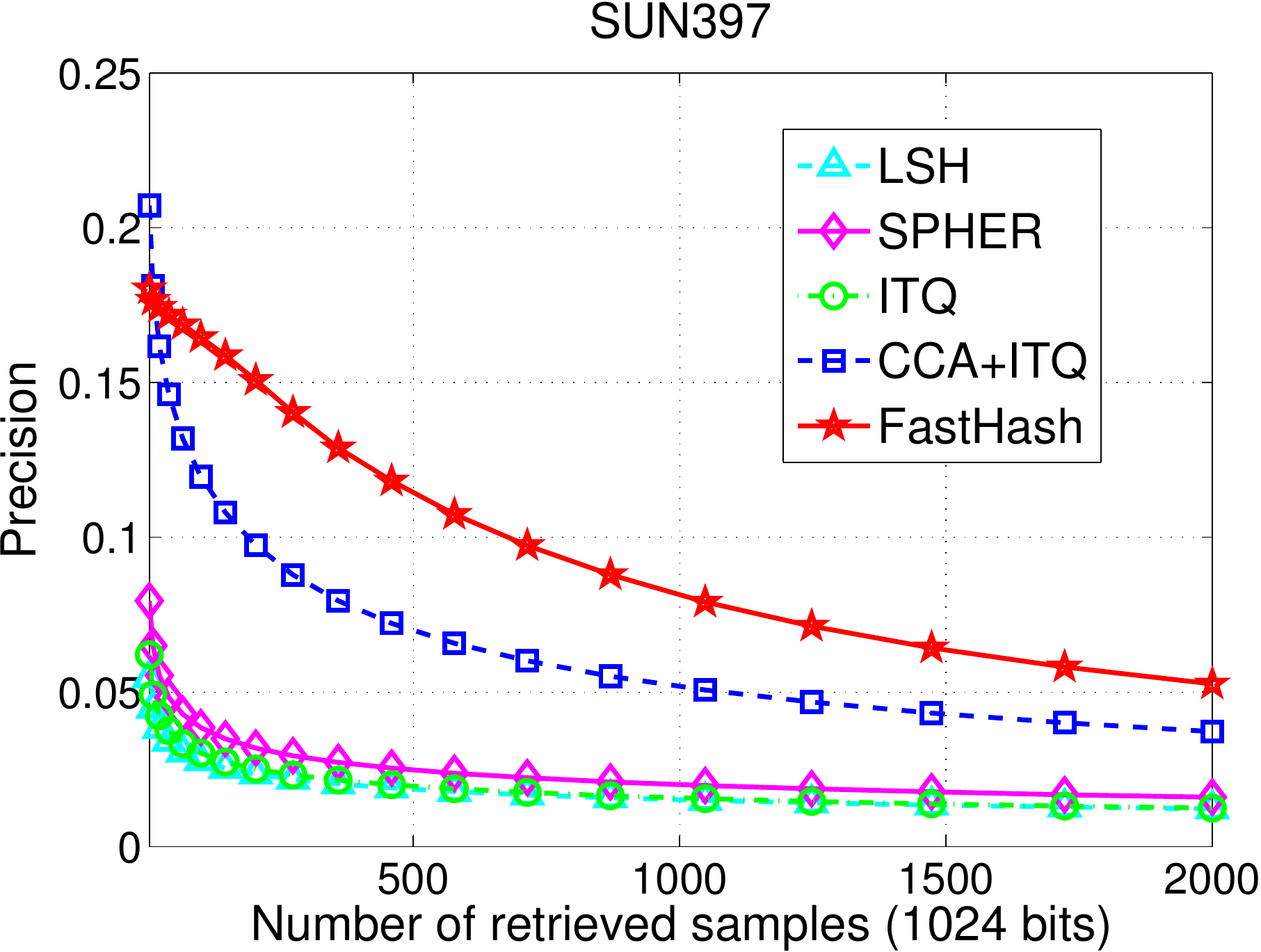}

    \caption{The top-$2000$ precision curve on large dataset SUN397 ($1024$ bits).
    Our \fasth performs the best.
    }
    \label{fh-fig:sun}
\end{figure}

\begin{table}[t]
\caption{Results on SUN397 dataset.
Our \fasth can be efficiently trained
 on this large training set.
\fasth significantly outperforms other methods.}
\centering
 \resizebox{1\linewidth}{!}
  {
  \begin{tabular}{ l l l | l l c c}
    \hline \hline
Method  &\#Train  &Bits &Train time &Test time  &Precision  &MAP
\\
\hline
\multicolumn{7}{ c}{SUN397} \\ \hline
KSH &10000  &64 &57045  &463  &0.034  &0.023  \\
BREs  &10000  &64 &105240 &23 &0.019  &0.013  \\
SPLH  &10000  &64 &27552  &14 &0.022  &0.015  \\
STHs  &10000  &64 &22914  &14 &0.010  &0.008  \\
\hline
ITQ  &100417 &1024 &1686 &127  &0.030  &0.021 \\
SPHER  &100417 &1024 &35954  &121  &0.039  &0.024 \\
LSH  &$-$ &1024 &$-$ &99 &0.028  &0.019 \\
CCA+ITQ &100417 &512  &7484 &66 &0.113  &0.076  \\
CCA+ITQ  &100417 &1024 &15580  &127  &0.120  &0.081 \\
\best FastH & 100417 &512  &29624  &302  &0.149  &0.142 \\
\best FastH  &100417 &1024 &62076  &536  &\best 0.165  &\best 0.163 \\
\hline\hline
  \end{tabular}
  }
\label{fh-tab:sun}
\end{table}

\subsection{Large dataset: ImageNet}

The large dataset ILSVRC2012 contains more than $1.2$ million images from ImageNet \cite{imagenet}.
We use the provided training set as the database (around $1.2$ million)
and the validation set as test queries ($50$K images).
Convolution neural networks (CNNs) have shown the best classification performance on this dataset \cite{krizhevsky2012imagenet}. As described in \cite{Donahue14},
the neuron activation values of internal layers of CNNs can be used as features.
By using the Caffe toolbox \cite{Jia13caffe} which implements the CNN architecture in \cite{krizhevsky2012imagenet},
we extract $4096$-dimensional features from the the seventh layer of the CNN.
We compare with a number of supervised and unsupervised methods.
The depth for the decision trees is set to $16$.
The smallest $2\%$ of data weightings are trimmed for decision tree learning.
Most comparing supervised methods become intractable on the full training set ($1.2$ million examples).
In contrast, our method is still able to be efficiently trained on the whole training set.
For comparison,
we also construct a smaller dataset (denoted as ImageNet-50) by sampling $50$ classes from ILSVRC2012.
It contains $25,000$ training images (500 images for each class) and $2500$ testing images.
Results of ImageNet-50 and the full ILSVRC2012
 are presented in Table \ref{fh-tab:imagenet}.
Our \fasth performs significantly better than others.
The retrieval performance of $128$ bits on the full ILSVRC2012 is plotted in Figure~\ref{fh-fig:imagenet}.

\begin{table}[t]
\caption{Results on two ImageNet datasets using CNN features.
ImageNet-50 is a small subset of ILSVRC2012.
Our \fasth significantly outperforms others.
}
\centering
 \resizebox{.9\linewidth}{!}
  {
  \begin{tabular}{ l l l | c c c}
    \hline \hline
Method  &\#Train  &Bits  &Precision  &MAP  &Prec-Recall
\\
\hline
\multicolumn{6}{ c}{ImageNet-50} \\ \hdashline
KSH	&25000	&64	&0.572	&0.460	&0.328\\
BREs	&25000	&64	&0.377	&0.246	&0.189\\
SPLH	&25000	&64	&0.411	&0.303	&0.217\\
STHs	&25000	&64	&0.625	&0.580	&0.412\\
ITQ+CCA	&25000	&64	&0.690	&0.668	&0.517\\
ITQ	&25000	&64	&0.492	&0.358	&0.266\\
SPHER	&25000	&64	&0.345	&0.210	&0.155\\
LSH	&$-$	&64	&0.064	&0.046	&0.023\\
\best FastHash	&25000	&64	&\best 0.697	&\best 0.718	&\best 0.532\\
\hline
\multicolumn{6}{ c}{ILSVRC2012} \\ \hdashline
CCA+ITQ	&1.2M	&64	&0.195	&0.133	&0.049\\
CCA+ITQ	&1.2M	&128	&0.289	&0.199	&0.090\\
CCA+ITQ &1.2M &1024 &0.428  &0.305  &0.160\\
ITQ	&1.2M	&64	&0.227	&0.132	&0.053\\
ITQ	&1.2M	&128	&0.294	&0.175	&0.080\\
ITQ &1.2M &1024 &0.368  &0.227  &0.108\\
LSH &$-$ &1024 &0.126  &0.065  &0.023\\
\best FastH	&1.2M	&64	&0.383	&0.301	&0.107\\
\best FastH	&1.2M	&128	&\best 0.458	&\best 0.390	&\best 0.171\\
\hline\hline
  \end{tabular}
  }
\label{fh-tab:imagenet}
\end{table}

\begin{figure}[t]
    \centering

   \includegraphics[width=.68\linewidth]{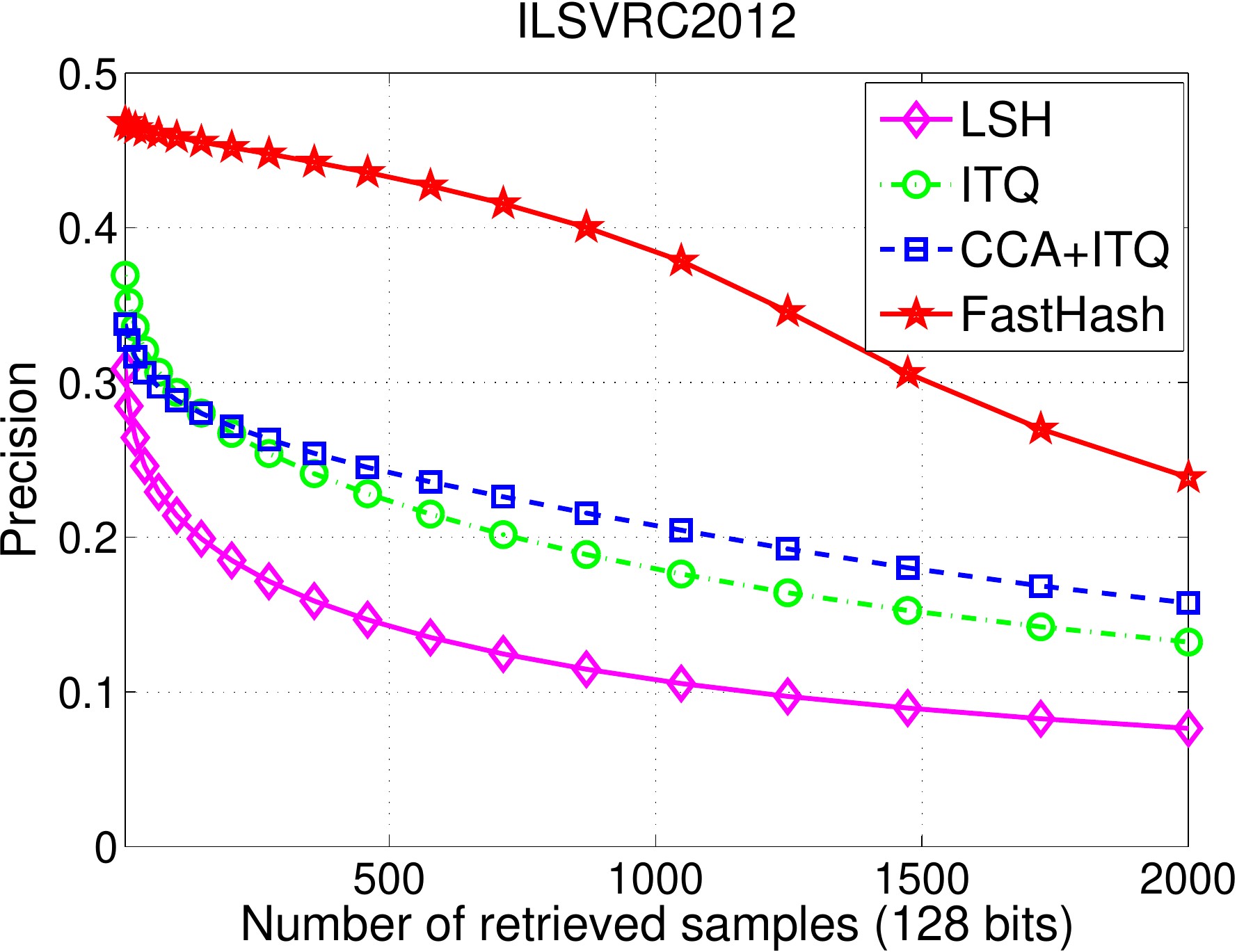}

    \caption{The top-$2000$ precision curve
     on large dataset ILSVRC2012
      ($128$ bits).
     Our \fasth outperforms others.
    }
    \label{fh-fig:imagenet}
\end{figure}

\subsection{Image classification}

Since binary codes have very small storage cost or network transfer cost,
image features can be compressed to binary codes by apply hashing methods.
Here we evaluate the image classification performance of using binary codes as features on the dataset ILSVRC2012.
Hashing methods are trained on the CNN features.
We apply two types of classification methods: the K nearest neighbor (KNN) classifier and the one-vs-all linear SVM classifier. KNN classification is performed by majority voting of top-K retrieved neighbors with smallest hamming distances. Results are shown in Table \ref{fh-tab:imagenet-mc}.
Our method outperforms all comparison hashing methods.

The CNN features used here
are extracted on the center crops of images using Caffe \cite{Jia13caffe}.
We also report the results of CNN methods which have the state-of-the-art results of this dataset.
As shown in Table \ref{fh-tab:imagenet-mc},
the performance gap is around $8\%$ between the error rate of our hashing method and that of Caffe with similar settings (only using center crops). However, $128$-bit binary codes in our methods take up around $1000$ times less storage than the CNN features with $4096$-dimensional float values. It shows that our method is able to perform effective binary compression without large performance loss.

\begin{table}[t]
\caption{Image classification results on dataset ILSVRC2012.
Binary codes are generated as features for training classifiers.
Our \fasth outperforms other hashing methods for binary compression of features.
}
\centering
\resizebox{0.9\linewidth}{!}
  {
  \begin{tabular}{ l l | c c}
    \hline \hline
Hashing method	&bits	&KNN-50 test error	&1-vs-all SVM test error\\
\hline
\multicolumn{4}{ c}{ILSVRC2012} \\ \hdashline
LSH	&128	&0.594	&0.939\\
ITQ	&128	&0.557	&0.919\\
CCA+ITQ	&64	&0.716	&0.691\\
CCA+ITQ	&128	&0.614	&0.583\\
FastHash	&64	&0.572	&0.567\\
FastHash	&128	&\best 0.516	&\best 0.512\\
\hline\hline
\multicolumn{2}{ l|}{Classification method} 	& \multicolumn{2}{ c}{Test error} \\ \hline
\multicolumn{2}{ l|}{Caffe \cite{Jia13caffe} (center crop) } & \multicolumn{2}{ c}{0.433} \\
\multicolumn{2}{ l|}{Caffe \cite{Jia13caffe} } & \multicolumn{2}{ c}{0.413} \\
\multicolumn{2}{ l|}{CNNs \cite{krizhevsky2012imagenet} (one model) } & \multicolumn{2}{ c}{0.407} \\
  \hline \hline
  \end{tabular}
  }
\label{fh-tab:imagenet-mc}
\end{table}

\section{Conclusion}

We have shown that various kinds of loss functions
and hash functions can be placed
in a unified learning framework for supervised hashing.
By using the proposed binary inference algorithm Block GraphCut
and learning decision tree hash functions,
our method can be efficiently trained on large-scale and high-dimensional data
and achieves high testing precision,
which indicates its practical significance on
many applications like large-scale image retrieval.

\section*{Acknowledgements}

This research was in part supported by the Data to Decisions Cooperative Research Centre.
C. Shen's participation was in part support by ARC Future Fellowship.

{
\bibliographystyle{IEEEtran}
\bibliography{Bibliography}
}

 \begin{IEEEbiography}[{\includegraphics[width=1in,height=1.25in,clip,keepaspectratio]{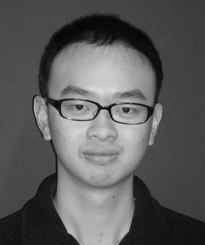}}]{Guosheng Lin}
 is a Research Fellow at School of Computer Science, The University of Adelaide.
 He completed his PhD degree at the same university in 2014.
 His research interests are on computer vision and machine learning.
 He received a Bachelor degree and a Master degree from the
 South China University of Technology in computer science in 2007 and 2010 respectively.
 \end{IEEEbiography}

\begin{IEEEbiography}[{\includegraphics[width=1in,height=1.25in,clip,keepaspectratio]
{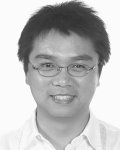}}]{Chunhua Shen}
is a Professor at School of Computer Science, The University of Adelaide.
His research interests are in the
intersection of computer vision and statistical machine learning.
He studied at Nanjing University, at Australian National University,
and received his PhD degree from University of Adelaide. In 2012,
he was awarded the Australian Research Council Future Fellowship.
\end{IEEEbiography}

 \begin{IEEEbiography}[{\includegraphics[width=1in,height=1.25in,clip,keepaspectratio]{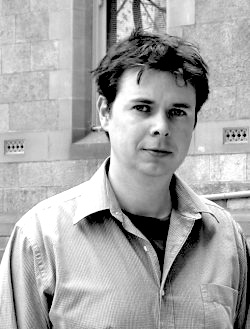}}]{Anton van~den~Hengel}
 is the founding Director of The Australian Centre for Visual Technologies (ACVT).
 He is a Professor in  School of Computer Science,
 The University of Adelaide. He received a PhD in Computer Vision in 2000,
 a Master Degree in Computer Science in 1994,
 a Bachelor of Laws in 1993, and a Bachelor of Mathematical Science in 1991, all from
 The University of Adelaide.
 \end{IEEEbiography}
\end{document}